\documentclass[runningheads,a4paper]{llncs}

\usepackage{amssymb}
\setcounter{tocdepth}{3}
\usepackage[breaklinks=true,bookmarks=false]{hyperref}
\usepackage{url}
\usepackage{amsmath,amssymb} 
\usepackage{epsfig}
\usepackage{import}
\usepackage{bm}
\usepackage{multirow}
\usepackage{algorithmic}
\usepackage{xspace}
\usepackage[font=small,labelfont=bf]{caption}
\usepackage{color}
\usepackage{tabularx}
\usepackage{array}
\usepackage[export]{adjustbox}


\makeatletter
\DeclareRobustCommand\onedot{\futurelet\@let@token\@onedot}
\def\@onedot{\ifx\@let@token.\else.\null\fi\xspace}
 
\def\ie{\emph{i.e}\onedot}

\makeatother

\newcommand{\RR}{\mathbb{R}}

\newcommand{\ba}{\mathbf{a}}

\newcommand{\bl}{\mathbf{l}}

\newcommand{\bn}{\mathbf{n}}

\newcommand{\bx}{\mathbf{x}}

\begin{document}
\mainmatter

\title{A Variational Approach to Shape-from-shading Under Natural Illumination}
\titlerunning{A Variational Approach to Shape-from-shading Under Natural Illumination}

\author{Yvain \textsc{Qu\'eau}\inst{1} \and Jean \textsc{M\'elou}\inst{2,3} \and \\ Fabien \textsc{Castan}\inst{3} \and Daniel \textsc{Cremers}\inst{1} \and Jean-Denis \textsc{Durou}\inst{2}}
\authorrunning{Yvain Qu\'eau et al.}

\institute{
Department of Informatics, Technical University Munich, Germany \\
\and
IRIT, UMR CNRS 5505, Universit\'e de Toulouse, France \\
\and
Mikros Image, Levallois-Perret, France}

\maketitle

\begin{abstract}
  A numerical solution to shape-from-shading under natural illumination is presented. It builds upon an augmented Lagrangian approach for solving a generic PDE-based shape-from-shading model which handles directional or spherical harmonic lighting, orthographic or perspective projection, and greylevel or multi-channel images. 
  Real-world applications to shading-aware depth map denoising, refinement and completion are presented. 
\end{abstract}


\section{Introduction}

Standard 3D-reconstruction pipelines are based on sparse 3D-reconstruction by structure-from-motion (SFM), densified by multi-view stereo (MVS). Both these techniques require unambiguous correspondences based on local color variations. Assumptions behind this requirement are that the surface of interest is Lambertian and well textured.
This has proved to be suitable for sparse reconstruction, but problematic for dense reconstruction: dense matching is impossible in textureless areas. In contrast, shape-from-shading (SFS) techniques explicitly model the reflectance of the object surface. The brightness variations observed in a single image provide dense geometric clues, even in textureless areas. SFS may thus eventually push back the limits of MVS.

However, most shape-from-shading methods require a highly controlled illumination and thus may fail when deployed outside the lab. \textbf{Numerical methods for SFS under natural illumination are still lacking}. Besides, SFS remains a classic ill-posed problem with well-known ambiguities such as the concave$/$convex ambiguity. Solving such ambiguities for real-world applications requires \textbf{handling priors on the surface}. There exist two main numerical strategies for solving shape-from-shading~\cite{Durou2008}. Variational methods~\cite{Horn1986} ensure smoothness through regularization. Handling priors is easy, but tuning the regularization may be tedious. Alternatively, methods based on the exact resolution of a nonlinear PDE~\cite{Lions1993}, which implicitly enforce differentiability (almost everywhere), do not require any tuning, but they lack robustness and they require a boundary condition. To combine the advantages of each approach, \textbf{a variational solution based on PDEs would be worthwile for SFS under natural illumination}.

\paragraph{Contributions --} This work proposes a generic numerical framework for SFS under natural illumination, which can be employed to achieve either pure shape-from-shading or shading-aware depth refinement (see Figure~\ref{fig:teaser}). After reviewing existing solutions in Section~\ref{sec:2}, we introduce in Section~\ref{sec:3} a new PDE-based model for SFS, which handles various illumination and camera models. A variational approach for solving the arising PDE is proposed in Section~\ref{sec:4}, which includes optional regularization terms for incorporating a shape prior or enforcing smoothness. Numerical solving is carried out using an ADMM algorithm.
Experiments on synthetic datasets are presented in Section~\ref{sec:5}, as well as real-world applications to depth refinement and completion for RGB-D cameras or stereovision systems. Our achievements are eventually summarized in Section~\ref{sec:6}.

\begin{figure}[!ht]
  \begin{tabular}{ccc}
    \includegraphics[height = 0.18\linewidth]{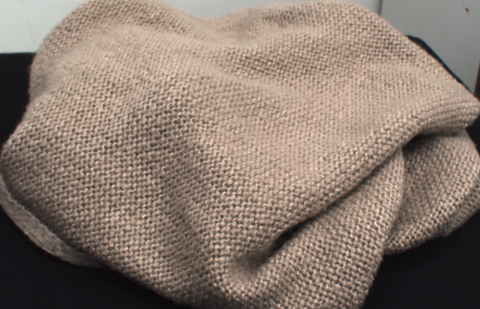} \hspace*{-3.3em}\includegraphics[height = 0.08\linewidth]{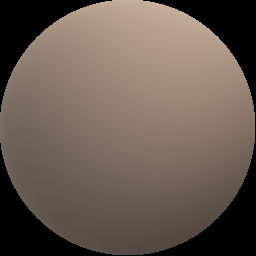}  &\, 
    \includegraphics[height = 0.18\linewidth]{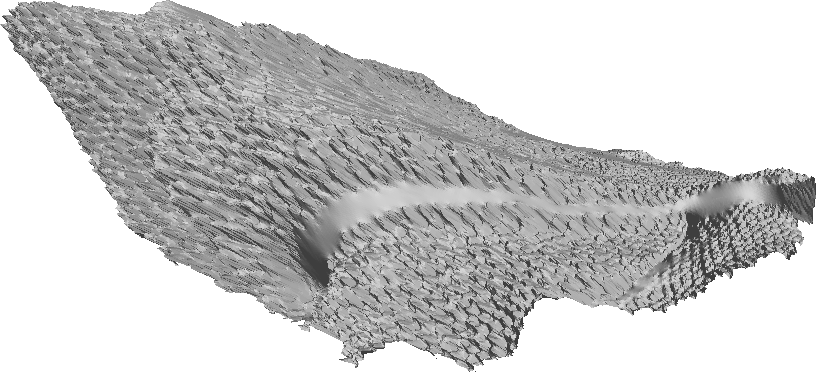} &\, 
    \includegraphics[height = 0.18\linewidth]{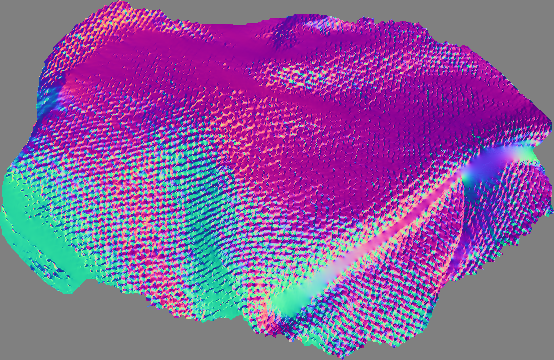} \\ 
    {\small Input real image} & \multicolumn{2}{c}{\small SFS result (no regularization)} \\ 
    {\small with illumination~\cite{Han2013}} & \multicolumn{2}{c}{($(\lambda,\mu,\nu) = (1,0,0)$)}\\[.5em]
  \end{tabular}
  \begin{tabular}{cccc}
    \includegraphics[height = 0.14\linewidth]{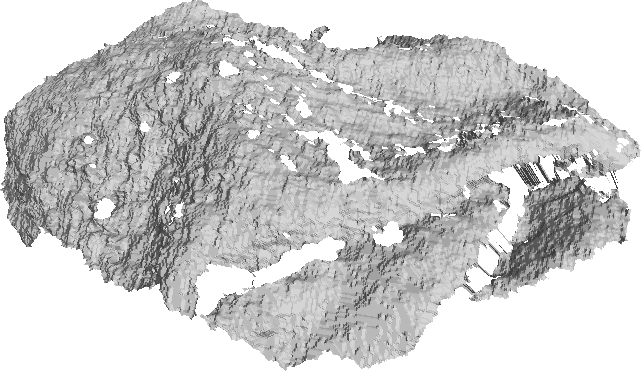} &\, 
    \includegraphics[height = 0.14\linewidth]{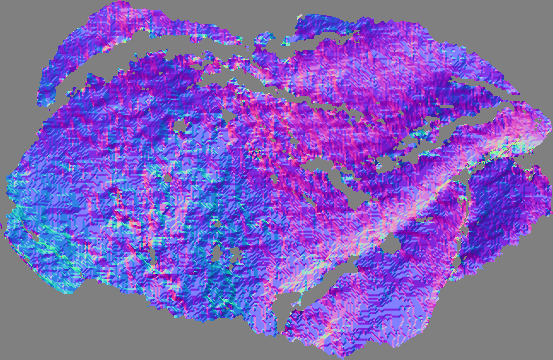} &\, 
    \includegraphics[height = 0.14\linewidth]{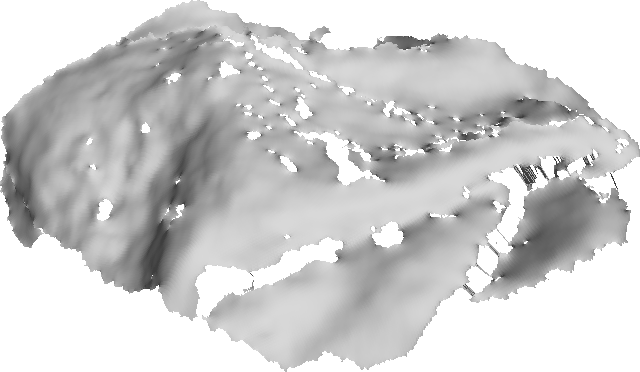} &\, 
    \includegraphics[height = 0.14\linewidth]{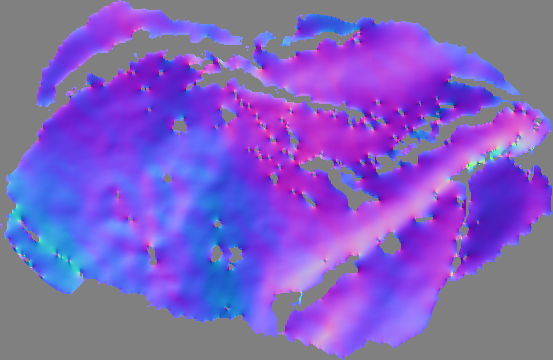} \\ 
   \multicolumn{2}{c}{\small Noisy input shape and normals~\cite{Han2013}} & \multicolumn{2}{c}{\small Minimal surface denoising (no SFS)}\\
   & & \multicolumn{2}{c}{ ($(\lambda,\mu,\nu) = (0,1,5.10^{-5})$) }\\[.5em]      
  \end{tabular}
  \begin{tabular}{cc}
    \includegraphics[height = 0.28\linewidth]{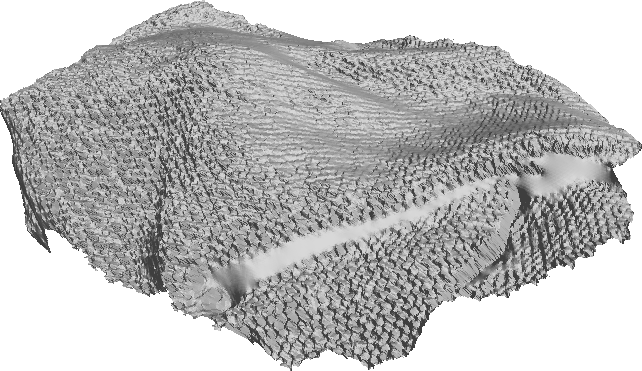} &\, 
    \includegraphics[height = 0.28\linewidth]{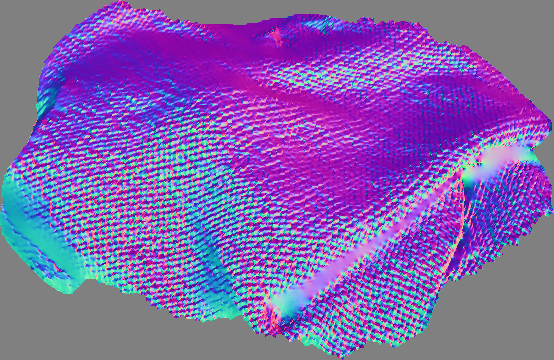} \\
    \multicolumn{2}{c}{\small SFS-based denoising and completion ($(\lambda,\mu,\nu) = (1,1,5.10^{-5})$)}
  \end{tabular}
  \caption{We propose the generic variational framework~\eqref{eq:13} for shape-from-shading (SFS) under natural illumination (top row). It is able to estimate a smooth surface (out of infinitely many), which almost exactly solves the generic SFS model~\eqref{eq:2}. To disambiguate SFS and improve robustness, prior surface knowledge (middle row, left) and minimal surface regularization (middle row, right) can be further included in the variational framework. These building blocks can be put together for shading-aware joint depth denoising, refinement and completion (bottom row).}
  \label{fig:teaser}
\end{figure}

\section{Image Formation Model and Related Works }
\label{sec:2}

In the following, a 3D-frame $(Oxyz)$ is attached to the camera, $O$ being the optical center and the axis $Oz$ coinciding with the optical axis, such that $z$ is oriented towards the scene. We denote $I:\,\Omega \subset \RR^2 \to \RR^C,\,(x,y) \mapsto I(x,y) = \left[I^1(x,y),\dots,I^C(x,y)\right]^\top$ a greylevel ($C=1$) or multi-channel ($C>1$) image of a surface, where $\Omega$ represents a ``mask'' of the object being pictured. We assume that the surface is Lambertian, so its reflectance is completely characterized by the albedo $\rho$. We further consider a second-order spherical harmonic model for the lighting vector $\bl$. To deal with the spectral dependencies of reflectance and lighting, we assume a general model where both $\rho$ and $\bl$ are channel-dependent. The albedo is thus a function $\rho:\,\Omega \to \RR^C,\,(x,y) \mapsto \rho(x,y) = \left[\rho^1(x,y),\dots,\rho^C(x,y)\right]^\top$, and the lighting in each channel $c \in \{1,\dots,C\}$ is represented as a vector $\bl^c = \left[l^c_1,l^c_2,l^c_3,l^c_4,l^c_5,l^c_6,l^c_7,l^c_8,l^c_9\right]^\top \in \RR^9$. Eventually, let $\bn:\,\Omega \to \mathbb{S}^2 \subset \mathbb{R}^3,\,(x,y) \mapsto \bn(x,y) = \left[n_1(x,y),n_2(x,y),n_3(x,y)\right]^\top$ be the field of unit-length outward  normals to the surface. The image formation model is then written as the following extension of a well-known model~\cite{Basri2003}:
\begin{equation}
  I^c(x,y) = \rho^c(x,y) \, \bl^c
  \cdot 
  \begin{bmatrix}
    \mathbf{n}(x,y) \\
    1 \\
    n_1(x,y) n_2(x,y) \\ n_1(x,y) n_3(x,y) \\ n_2(x,y) n_3(x,y) \\ {n_1(x,y)}^2 - {n_2(x,y)}^2 \\ 3 {n_3(x,y)}^2 -1
  \end{bmatrix},~(x,y) \in \Omega,~c \in \{1,\dots,C\}.
  \label{eq:1} 
\end{equation}

However, let us remark that the writing~\eqref{eq:1}, where both the reflectance and the lighting are channel-dependent, is abusive. Since the camera response function is also channel-dependent, this model is indeed justified only for white surfaces ($\rho^c = \rho,~\forall c \in \{1,\dots,C\}$) under colored lighting, or for colored surfaces under white lighting ($\bl^c = \bl,~\forall c \in \{1,\dots,C\}$). See, for instance,~\cite{Queau_2016_CVPR} for some discussion. In the following, we still consider the general model~\eqref{eq:1}, with a view in designing a generic SFS solver handling both situations. However, in the experiments we will only consider white surfaces.

In SFS, both the reflectance values $\{\rho^c\}_{c \in \{1,\dots,C\}}$ and the lighting vectors $\{\bl^c\}_{c \in \{1,\dots,C\}}$ are assumed to be known. The goal is to recover the object shape, represented in~\eqref{eq:1} by the normal field $\bn$. Each unit-length normal vector $\bn(x,y)$ has two degrees of freedom, thus each Equation~\eqref{eq:1}, $(x,y) \in \Omega$, $c \in \{1,\dots,C\}$, is a nonlinear equation with two unknowns. If $C=1$, it is impossible to solve such an equation locally: all these equations must be solved together, by coupling the surface normals in order to ensure, for instance, surface smoothness. When $C>1$ and the lighting vectors are non-coplanar, ambiguities theoretically disappear~\cite{Johnson2011}. However, under natural illumination these vectors are close to being collinear, and thus locally solving~\eqref{eq:1} is numerically challenging (bad conditioning). Again, a global solution should be preferred but this time, for robustness reasons.  

There is a large amount of literature on numerical SFS, in the specific case where $C=1$ and lighting is directional ($l^c_4 = \dots = l^c_9 = 0)$, see for instance~\cite{Durou2008}. However, few SFS methods deal with more general spherical harmonic lighting. First-order harmonics have been considered in~\cite{Huang2011,Or-El2015}, but they only capture up to $90\%$ of natural illumination, while this rate is over $99\%$ using second-order harmonics~\cite{Frolova2004}. The latter have been used in~\cite{Barron2015}, where the challenging problem of shape, illumination and reflectance from shading (SIRFS) is tackled (this method is also applicable to SFS if albedo and lighting are fixed). However, all these works heavily rely on multi-scale or regularization mechanisms, and not only for disambiguation or for handling noise. For instance, SIRFS ``fails badly''~\cite{Barron2015} without a multi-scale strategy, and the method of~\cite{Or-El2015} becomes unstable without depth regularization (see Figure~\ref{fig:5}). Although regularization mechanisms somewhat circumvent such numerical instabilities in practice, an ideal numerical solver would rely on regularization only for disambiguation and for handling noise, not for enforcing numerical stability. In order to design such a solver, a variational approach based on PDEs may be worthwile. In the next section, we thus rewrite~\eqref{eq:1} as a nonlinear PDE.

\begin{figure}[!ht]
\centering
\begin{tabular}{cc}
\includegraphics[height=0.3\linewidth]{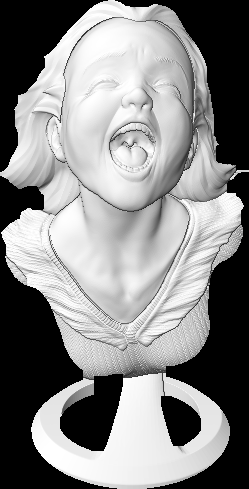}
\hspace*{-0.0cm}\includegraphics[height=0.05\linewidth]{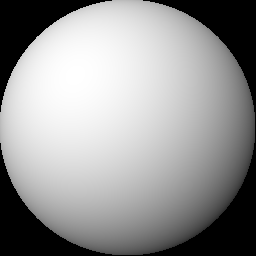}  \quad & \quad
\includegraphics[height=0.3\linewidth,trim = 10em 6em 10em 6em,clip]{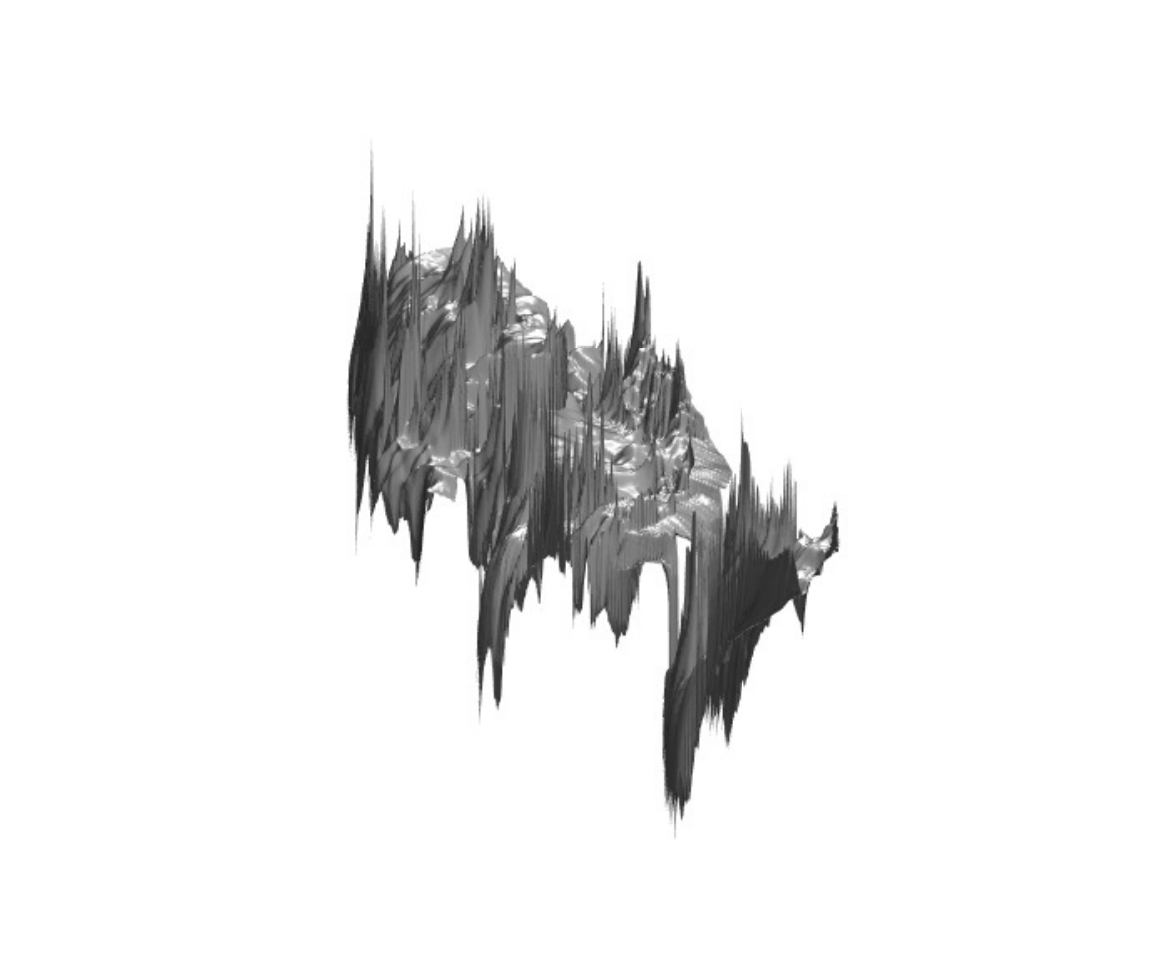} \\
{\small Input synthetic image and illumination} \quad & \quad {\small Fixed point~\cite{Or-El2015} without regularization} \\
\includegraphics[height=0.3\linewidth,trim = 8em 9em 8em 5em,clip]{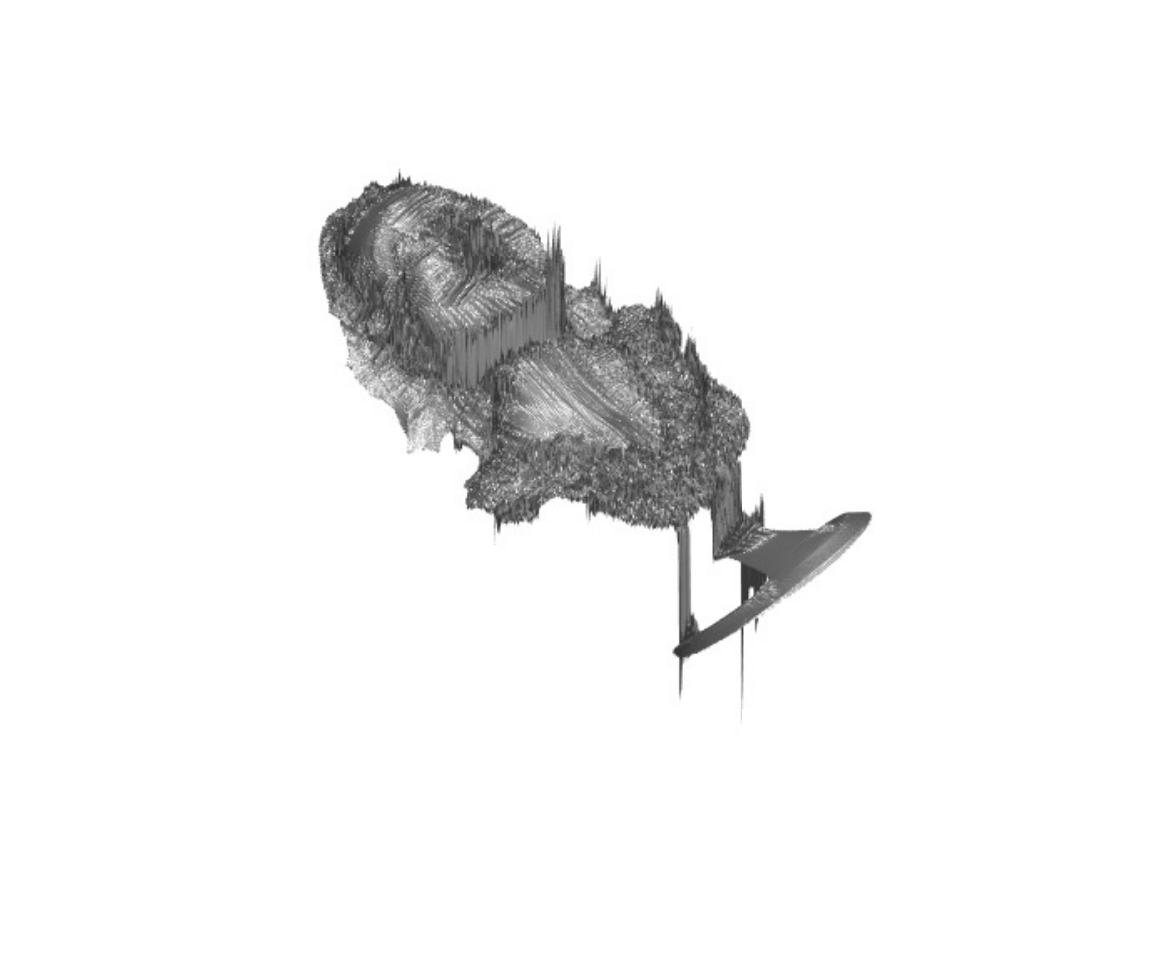} \quad&\quad 
\includegraphics[height=0.3\linewidth,trim = 7em 6em 7em 4.5em,clip]{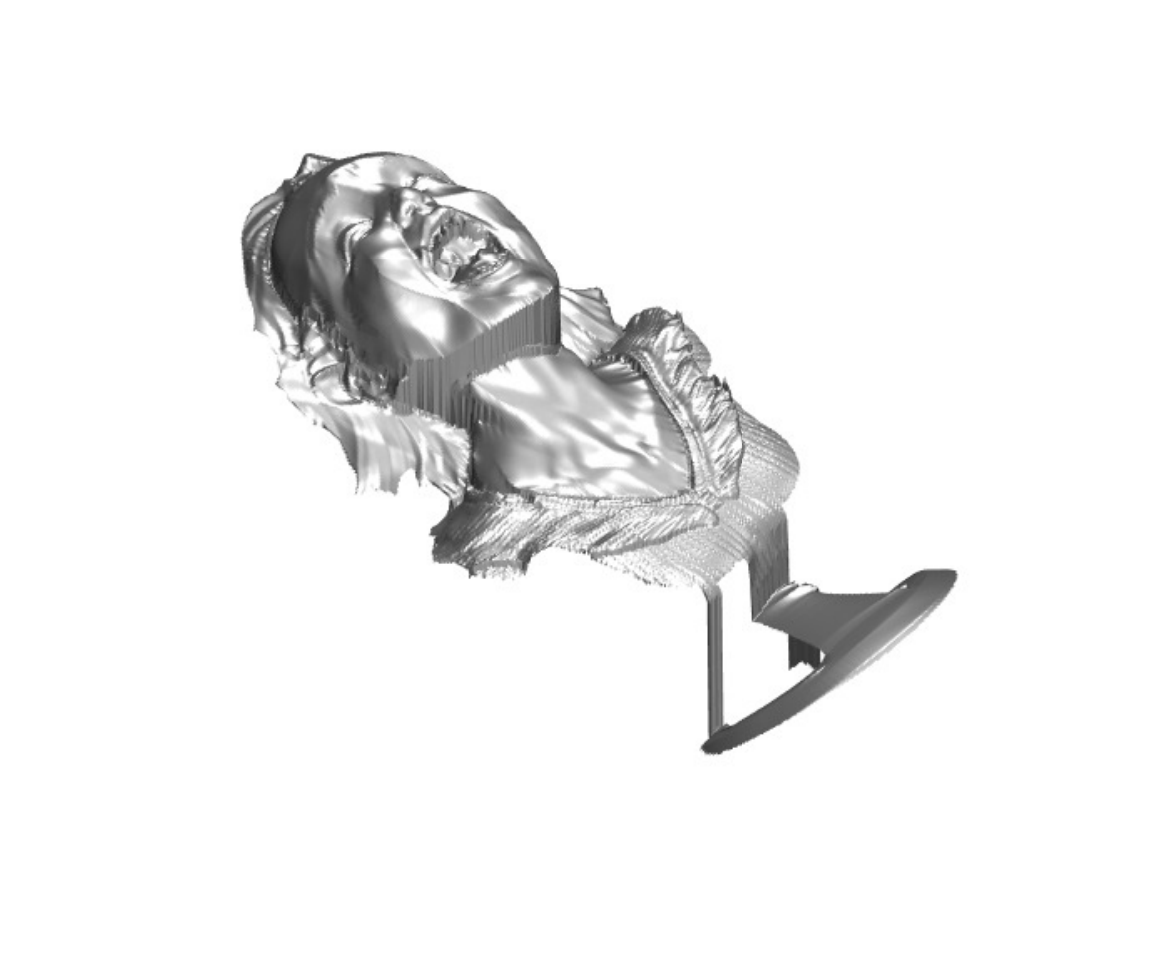} \\
 {\small Single-scale SIRFS~\cite{Barron2015}}\quad &\quad  {\small Proposed (without regularization)} 
\end{tabular}
\caption{Greylevel shape-from-shading using first-order spherical harmonics. Linearization strategies such as the fixed point one used in~\cite{Or-El2015} fail if regularization is not employed. Similar issues arise in SIRFS~\cite{Barron2015} when the multi-scale approach is not used. Our SFS method can use regularization for disambiguation and for improving robustness, but it remains stable even without. In these three experiments, the same initial shape was used (the ``Realistic initialization'' of Figure~\ref{fig:experiments}). }
\label{fig:5}
\end{figure}

\section{A Generic PDE-based Model for Shape-from-shading}
\label{sec:3}

We assume hereafter that lighting and albedo are known (in our experiments, the albedo is assumed uniformly white and colored lighting is estimated from a gross surface approximation). These assumptions are usual in the SFS literature. They could be relaxed by simultaneously estimating shape, illumination and reflectance~\cite{Barron2015}, but we leave this as future work and focus only on shape estimation. This is the most challenging part anyways, since~\eqref{eq:1} is linear in the lighting and the albedo, but is generally nonlinear in the normal.

In order to comply with the discussion above, Equation~\eqref{eq:1} should be solved \emph{globally} over the entire domain $\Omega$. To this end, we do not estimate the normals but rather the underlying depth map, through a PDE-based approach~\cite{Lions1993}. This has the advantage of implicitly enforcing smoothness (almost everywhere) without requiring any regularization term (regularization will be introduced in Section~\ref{sec:4}, but only for the sake of disambiguation and robustness against noise). We show in this section the following result:

\begin{proposition}
Under both orthographic and perspective projections, the image formation model~\eqref{eq:1} can be rewritten as the following nonlinear PDE in $z$:
\begin{equation}
  \mathbf{a}^c_{(\nabla z)} \cdot \nabla z
  +b^c_{(\nabla z)} = I^c\quad \text{~over~} \Omega,~c \in \{1,\dots,C\}
  \label{eq:2}
\end{equation}
with $z:\,\Omega \to \RR$ a map characterizing the shape, $\nabla z:\,\Omega \to \RR^2$ its gradient, and where $\mathbf{a}^c_{(\nabla z)}:\, \Omega \to \RR^2$ and $b^c_{(\nabla z)}:\,\Omega \to \RR$ are a vector field and a scalar field, respectively, which depend in a nonlinear way on $\nabla z$.\newline{}
\end{proposition}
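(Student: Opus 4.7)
The plan is to derive an explicit expression for the unit surface normal $\bn$ in terms of $\nabla z$, substitute it into the image formation model \eqref{eq:1}, and then rearrange the resulting scalar equation algebraically. Under orthographic projection, parameterizing the surface as $(x,y,z(x,y))^\top$ immediately gives $\bn = (-\nabla z, 1)^\top/N$ with $N = \sqrt{1+|\nabla z|^2}$, so $\bn$ already depends only on $\nabla z$. Under perspective projection with focal length $f$, parameterizing as $(xz/f, yz/f, z)^\top$ and taking the cross product of the two tangent vectors yields an unnormalized normal proportional to $(-fz_x, -fz_y, z+xz_x+yz_y)^\top$. The crucial step in this case is the change of variable $\tilde z = \log z$: since $z_x = z\,\tilde z_x$, the unnormalized normal factors as $(z^2/f^2)\,(-f\tilde z_x, -f\tilde z_y, 1+x\tilde z_x+y\tilde z_y)^\top$, the scalar prefactor cancels upon normalization, and $\bn$ depends only on $\nabla \tilde z$. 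Re-interpreting $z$ as log-depth in the perspective setting, \eqref{eq:1} becomes in both cases a pointwise algebraic equation in $\nabla z$ alone.

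Substituting $\bn = \tilde\bn/N$ into \eqref{eq:1}, each linear harmonic term $l^c_k n_i$ picks up a factor $1/N$ and each quadratic term $l^c_k n_i n_j$ a factor $1/N^2$. I would then partition the right-hand side by pulling out one copy of $\nabla z$ from every contribution that carries one: the first-order harmonics $l_1^c n_1 + l_2^c n_2$ give $(-l_1^c,-l_2^c)^\top \cdot \nabla z/N$; the mixed terms $l_6^c n_1 n_3$ and $l_7^c n_2 n_3$ are linear in $\nabla z$ with coefficients scaled by $1/N^2$; and each genuinely quadratic monomial, such as $l_5^c z_x z_y/N^2$ or $l_8^c(z_x^2 - z_y^2)/N^2$, can be split by factoring out one copy of $z_x$ or $z_y$ and absorbing the remaining factor into the (now $\nabla z$-dependent) coefficient. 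Gathering all such contributions into $\mathbf{a}^c_{(\nabla z)}$, and collecting the remaining $\nabla z$-free summands (the constant $l_4^c$, the $n_3$-only terms, and the residual $1/N$, $1/N^2$ factors) together with the albedo $\rho^c$ into $b^c_{(\nabla z)}$, delivers exactly the form \eqref{eq:2}.

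The bulk of the proof is mechanical algebra; the main conceptual step is the logarithmic substitution in the perspective case, without which the equation would retain an explicit dependence on $z$ and thus not qualify as a first-order PDE in $\nabla z$ alone. A minor subtlety worth flagging is that the decomposition into $\mathbf{a}^c_{(\nabla z)}\cdot\nabla z + b^c_{(\nabla z)}$ is highly non-unique, since any quadratic monomial in $\nabla z$ admits multiple factorizations between the two terms. The proposition is therefore really an existence statement, and any valid decomposition establishes it, although the specific choice will matter later for the numerical scheme built on top of \eqref{eq:2}.
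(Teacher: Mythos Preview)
Your approach is essentially the same as the paper's: parameterize the surface, compute the normal, use the logarithmic change of variable in the perspective case so that the normal depends on $\nabla z$ only, and substitute into \eqref{eq:1}. The paper carries this out with a unified parameterization (writing $(z,f,\tilde x,\tilde y)=(\tilde z,1,0,0)$ orthographically and $(\log\tilde z,\tilde f,x-x_0,y-y_0)$ perspectively) so that a single normal formula covers both cases, but this is just bookkeeping.

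The one genuine difference is the choice of decomposition. You propose to strip one factor of $\nabla z$ from every term that carries one, including the second-order harmonics $l_5^c,\dots,l_9^c$, and push all those contributions into $\mathbf a^c_{(\nabla z)}$. The paper instead makes the minimal choice: only the part of $\rho^c(l_1^c n_1 + l_2^c n_2 + l_3^c n_3)$ that is linear in $\nabla z$ goes into $\mathbf a^c_{(\nabla z)}$, and \emph{all} second-order harmonic contributions are absorbed wholesale into $b^c_{(\nabla z)}$. Both choices establish the proposition, as you correctly observe when noting the non-uniqueness. The paper's choice keeps $\mathbf a^c$ as simple as possible (a single $1/d_{(\nabla z)}$ factor times a vector built from $l_1^c,l_2^c,l_3^c$ alone), which is convenient downstream; your choice would make $\mathbf a^c$ carry $1/d^2$ factors and extra $\nabla z$-dependence for no obvious gain. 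One small caveat: your detailed term-by-term breakdown is written as if $n_3$ were $\nabla z$-free, which is only true orthographically; under perspective projection $n_3$ also contains a $[\tilde x,\tilde y]\cdot\nabla z$ piece that must be split between $\mathbf a^c$ and $b^c$. The paper's formula handles this uniformly, and your argument would need a sentence to cover it.
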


\begin{proof}

The 3D-shape can be represented as a patch over the image domain, which associates each pixel $(x,y) \in \Omega$ to its conjugate 3D-point $\bx(x,y)$ on the surface: 
\begin{equation}
\begin{array}{rcl}
  \mathbf{x}:\,& \Omega & \to \RR^3 \\
 &  (x,y) & \mapsto 
 \begin{cases}
   \left[x,y,\tilde{z}(x,y)\right]^\top & \text{under orthographic projection,} \\
   \tilde{z}(x,y)\left[\frac{x-x_0}{\tilde{f}},\frac{y-y_0}{\tilde{f}},1\right]^\top & \text{under perspective projection,}
 \end{cases} 
\end{array}
  \label{eq:3}
\end{equation}
with $\tilde{z}$ the \emph{depth map}, $\tilde{f} >0$ the \textit{focal length}, and $(x_0,y_0) \in \Omega$ the coordinates of the \textit{principal point} in the image plane. 

Using this parameterization, the normal to the surface in a surface point $\mathbf{x}(x,y)$ is the unit-length, outgoing vector proportional to the cross product $\mathbf{x}_x(x,y) \times  \mathbf{x}_y(x,y)$, where $\mathbf{x}_x$ (resp. $\mathbf{x}_y$) is the partial derivative of $\mathbf{x}$ along the $x$ (resp. $y$)-direction. After a bit of algebra, the following formula is obtained, which relates the normal field to the depth map:
\begin{equation}
\begin{array}{rcl}
  \mathbf{n}:\,& \Omega & \to \mathbb{S}^2 \subset \RR^3 \\
 &  (x,y) & \mapsto \dfrac{1}{d_{(\nabla z)}(x,y)} \begin{bmatrix} {f} \, \nabla z(x,y) \\ -1 - [\tilde{x},\tilde{y}]^\top \cdot \nabla z(x,y) \end{bmatrix}, 
\end{array}
  \label{eq:4}
\end{equation}
where 
\begin{equation}
({z},{f},\tilde{x},\tilde{y}) = 
\begin{cases}
  (\tilde{z},1,0,0) & \text{under orthographic projection},\\
  (\log \tilde{z},\tilde{f},x-x_0,y-y_0) & \text{under perspective projection},  
\end{cases}
  \label{eq:5}
\end{equation}
and where the map $d_{(\nabla z)}$ ensures the unit-length constraint on $\mathbf{n}$:
\begin{equation}
\begin{array}{rcl}
  d_{(\nabla z)}:\,& \Omega & \to \RR \\
  & (x,y) & \mapsto \sqrt{f^2 \| \nabla z(x,y) \|^2 + \left( 1 + \left[\tilde{x},\tilde{y}\right]^\top \cdot \nabla z(x,y)  \right)^2}.
\end{array}
\label{eq:6}
\end{equation}
Note that $\| d_{(\nabla z)} \|_{\ell^1(\Omega)}$ is the total area of the surface, which will be used in Section~\ref{sec:4} for designing a regularization term.

By plugging~\eqref{eq:4} into~\eqref{eq:1}, we obtain the nonlinear PDE~\eqref{eq:2}, if we denote: 
\begin{align}
  & \begin{array}{rcl}
    \ba^c_{(\nabla z)}:\,& \Omega & \to \RR^2 \\
    & (x,y) & \mapsto 
    \dfrac{\rho^c(x,y)}{d_{(\nabla z)}(x,y)} \begin{bmatrix}
 f\,l^c_1-\tilde{x}\,l^c_3 \\
 f\,l^c_2-\tilde{y}\,l^c_3 
    \end{bmatrix},
  \end{array}
    \label{eq:7} \\
&  \begin{array}{rcl}
    b^c_{(\nabla z)}:\,& \Omega & \to \RR \\
    & (x,y) & \mapsto \rho^c \, \begin{bmatrix} l^c_3 \\ l^c_4 \\ l^c_5 \\ l^c_6 \\ l^c_7 \\l^c_8 \\ l^c_9 \end{bmatrix} \cdot \begin{bmatrix}
  \frac{-1}{d_{(\nabla z)}(x,y)} \\
  1 \\
  \frac{f^2 z_x(x,y) z_y(x,y)}{\left(d_{(\nabla z)}(x,y)\right)^2} \\
  \frac{f z_x(x,y)\left(-1-(\tilde{x},\tilde{y}) \cdot \nabla z(x,y) \right)}{\left(d_{(\nabla z)}(x,y)\right)^2} \\
  \frac{f z_y(x,y)\left(-1-(\tilde{x},\tilde{y}) \cdot \nabla z(x,y) \right)}{\left(d_{(\nabla z)}(x,y)\right)^2} \\  
  \frac{f^2\left({z_x}(x,y)^2-{z_y}(x,y)^2\right)}{\left(d_{(\nabla z)}(x,y)\right)^2} \\  
  \frac{3\left(-1-(\tilde{x},\tilde{y}) \cdot \nabla z(x,y)\right)^2}{\left(d_{(\nabla z)}(x,y)\right)^2}-1 
  \end{bmatrix}.
  \end{array}
    \label{eq:8}
\end{align}
 \qed 
\end{proof}

When $C=1$, the camera is orthographic and the lighting is directional and frontal (\ie, $l_3 < 0$ is the only non-zero lighting component), then~\eqref{eq:2} becomes the \emph{eikonal equation} $\frac{\rho |l_3|}{\sqrt{1+\|\nabla z\|^2}} = I$. Efficient numerical methods for solving this nonlinear PDE have been suggested, using for instance semi-Lagrangian schemes~\cite{Cristiani2007}. Such techniques can also handle perspective camera projection and$/$or nearby point light source illumination~\cite{Breuss2012}. Still, existing PDE-based methods require a boundary condition, or at least a state constraint, which is rarely available in practice. In addition, the more general PDE-based model~\eqref{eq:2}, which handles both orthographic or perspective camera, directional or second-order spherical harmonic lighting, and greylevel or multi-channel images, has not been tackled so far. A variational solution to this generic SFS problem, which is inspired by the classical method of Horn and Brooks~\cite{Horn1986}, is presented in the next section.

\section{Variational Formulation and Optimization}
\label{sec:4}

The $C$ PDEs in~\eqref{eq:2} are in general incompatible due to noise. Thus, an approximate solution must be sought. If we assume that the image formation model~\eqref{eq:1} is satisfied up to an additive, zero-mean and homoskedastic, Gaussian noise, then the maximum likelihood solution is attained by estimating the depth map $z$ which minimizes the following least-squares cost function:
\begin{equation}
  \mathcal{E}(\nabla z;I) =  \displaystyle\sum_{c=1}^C \left\| \ba^c_{(\nabla z)} \cdot \nabla z + b^c_{(\nabla z)} - I^c\right\|^2_{\ell^2(\Omega)}.
  \label{eq:9}
\end{equation}


In recent works on shading-based refinement~\cite{Or-El2015}, it is suggested to minimize a cost function similar to~\eqref{eq:9} iteratively, by freezing the nonlinear fields $\ba^c$ and $b^c$ at each iteration. This strategy must be avoided. For instance, it cannot handle the simplest case of orthographic projection and directional, frontal lighting: this yields $\ba^c \equiv {\bm 0}$ according to~\eqref{eq:7}, and thus~\eqref{eq:9} does not even depend on the unknown depth $z$ if $b^c$ is freezed. Even in less trivial cases, Figure~\ref{fig:5} shows that this strategy is unstable, which explains why regularization is employed in~\cite{Or-El2015}. We will also resort to regularization later on, but only for the sake of disambiguating SFS and handling noise: our proposal yields a stable solution even in the absence of regularization (see Figure~\ref{fig:5}). Let us first sketch the proposed solver in the regularization-free case, and discuss its connection with Horn and Brooks' variational approach to SFS.

\subsection{Horn and Brooks' Method Revisited}

In~\cite{Horn1986}, Horn and Brooks introduce a variational approach for solving the eikonal SFS model, which is a special case of~\eqref{eq:2}. They promote a two-stages shape recovery method, which first estimates the gradient and then integrates it into a depth map. That is to say, the energy~\eqref{eq:9} is first minimized in terms of the gradient $\theta := \nabla z$, and then $\theta$ is integrated into a depth map $z$. 

Since local gradient estimation is ambiguous, they put forward the introduction of the so-called integrability constraint for the first stage. Indeed, $\theta$ is the gradient of a function $z$: it must be a conservative field. This implies that it should be irrotational (zero-curl condition). Introducing the divergence operator $\nabla \cdot$, the latter condition reads
\begin{equation}
\underbrace{
\left\|
\nabla \cdot \begin{bmatrix}
0 & 1 \\
-1 & 0
\end{bmatrix} \theta \right\|_{\ell^2(\Omega)}^2}_{:= \mathcal{I}(\theta)} = 0.
\label{eq:integ}
\end{equation}

In practice, they convert the hard-constraint~\eqref{eq:integ} into a regularization term, introducing two hyper-parameters $(\lambda,\mu)>(0,0)$ to balance the adequation to the images and the integrability of the estimated field:
\begin{equation}
  \widehat{\theta} = \underset{\theta:\,\Omega \to \RR^2}{\operatorname{argmin~}} \lambda \, \mathcal{E}(\theta;I)+\mu \, \mathcal{I}(\theta).
  \label{eq:HB1}
\end{equation}

After solving~\eqref{eq:HB1}, $\widehat{\theta}$ is integrated into a depth map $z$, by solving $\nabla z = \widehat{\theta}$. However, since integrability is not strictly enforced but only used as regularization, there is no guarantee for $\widehat{\theta}$ to be integrable. Therefore, Horn and Brooks recast the integration task as another variational problem (see~\cite{Queau_2017_JMIV} for an overview of this problem):
\begin{equation}
\min_{z:\,\Omega \to \RR} \left\| \nabla z - \widehat{\theta} \right\|_{\ell^2(\Omega)}^2.
\label{eq:HB2}
\end{equation}

This two-stages approach consisting in solving~\eqref{eq:HB1}, and then~\eqref{eq:HB2}, is however prone to bias propagation: any error during gradient estimation may have dramatic consequences for the integration stage. We argue that such a sequential approach is not necessary. Indeed, $\theta$ is conservative \textit{by construction}, and hence integrability should not even have to be invoked. We put forward an integrated approach, which infers shape clues from the image using local gradient estimation as in Horn and Brooks' method, but which explicitly constrains the gradient to be conservative. That is to say, we simultaneously estimate the depth map and its gradient, by turning the minimization of~\eqref{eq:9} into a constrained variational problem:
\begin{equation}
\begin{array}{c}
	\underset{\substack{\theta:\,\Omega \to \RR^2 \\ z:\, \Omega \to \RR }}{\operatorname{min\quad}} \mathcal{E}(\theta;I) \\
	\text{s.t.~} \nabla z = \theta
\end{array}
\label{eq:HB3}
\end{equation}

The variational problem~\eqref{eq:HB3} can be solved using an augmented Lagrangian approach. In comparison with Horn and Brooks' method, this avoids tuning the hyper-parameter $\mu$ in~\eqref{eq:HB1}, as well as bias propagation due to the two-stages approach. Besides, this approach is easily extended in order to handle regularization terms, if needed. In the next paragraph, we consider two types of regularization: one which represents prior knowledge of the surface, and another one which ensures its smoothness.

\subsection{Regularized Variational Model}

In some applications such as RGB-D sensing, or MVS, a depth map $z^0$ is available, which is usually noisy and incomplete but may represent a useful ``guide'' for SFS. We may thus consider the following prior term:
\begin{equation}
  \mathcal{P}(z;z^0) = \left\| z - z^0 \right\|_{\ell^2(\Omega^0)}^2,
    \label{eq:10}
\end{equation}
where $\Omega^0 \subseteq   \Omega \subset \RR^2$ is the image region for which prior information is available. 

In order not to interpret noise in the image as geometric artifacts, one may also want to improve robustness by explicitly including a smoothness term. However, standard total variation regularization, which is often considered in image processing, may tend to favor piecewise fronto-parallel surfaces and thus induce staircasing. We rather penalize the total area of the surface, which has recently been shown in~\cite{Graber2015} to be better suited for depth map regularization. To this end, let us remark that in differential geometry terms, the map $d_{(\nabla z)}$ defined in~\eqref{eq:6} is the square root of the determinant of the first fundamental form of function z (metric tensor). Its integral over $\Omega$ is exactly the area of the surface, and thus the following smoothness term may be considered:
\begin{equation}
  \mathcal{S}(\nabla z) = \left\| d_{(\nabla z)} \right\|_{\ell^1(\Omega)}. 
    \label{eq:11} 
\end{equation}

Putting altogether the pieces~\eqref{eq:9},~\eqref{eq:10} and~\eqref{eq:11}, and using the same change of variable $\theta:= \nabla z$ as in~\eqref{eq:HB3}, we obtain the following constrained variational approach to shape-from-shading:  
\begin{equation}
\begin{array}{l}
  \underset{\substack{\theta:\, \Omega \to\RR^2 \\ z:\,\Omega \to \RR}}{\min~}   \lambda \, \mathcal{E}(\theta;I) + \mu \, \mathcal{P}(z;z^0) + \nu \, \mathcal{S}(\theta)   \\
  \text{s.t.}~ \nabla z = \theta
\end{array}
\label{eq:13}
\end{equation} 
which is a regularized version of the pure shape-from-shading model~\eqref{eq:HB3} where $(\lambda,\mu,\nu) \geq (0,0,0)$ are user-defined parameters controlling the respective influence of each term. 

Let us remark that our variational model~\eqref{eq:13} yields a pure SFS model if $\mu = \nu = 0$, a depth denoising model similar to that in~\cite{Graber2015} if $\lambda =0$ and $\Omega^0 = \Omega$, and a shading-aware joint depth refinement and completion if $\lambda>0$, $\mu>0$ and $\Omega^0 \subsetneq  \Omega$.  

\subsection{Numerical Solution}

The change of variable $\theta:= \nabla z$ in~\eqref{eq:13} has a major advantage when it comes to numerical solving: it separates the difficulty  induced by the nonlinearity (shape-from-shading model and minimal surface prior) from that induced by the global nature of the problem (dependency upon the depth gradient). 

Optimization can then be carried out by alternating nonlinear, yet local gradient estimation and global, yet linear depth estimation. To this end, we make use of the ADMM procedure, a standard approach to constrained optimization which dates back to the 70s~\cite{Glowinski1975}. We refer the reader to~\cite{Boyd2011} for a recent overview of this method. 

The augmented Lagrangian functional associated to~\eqref{eq:13} is defined as follows:
\begin{equation}
\mathcal{L}_\beta(\theta,z,\Psi) =    \lambda \, \mathcal{E}(\theta;I) + \mu \, \mathcal{P}(z;z^0) + \nu \, \mathcal{S}(\theta)   + \langle \Psi, \nabla z - \theta \rangle + \frac{\beta}{2} \left\| \nabla z - \theta \right\|_2^2,
\end{equation}
with $\Psi:\,\Omega \to \RR^2$ the field of Lagrange multipliers, $\langle \cdotp \rangle$ the scalar product induced by $\|\cdotp\|_2$ over $\Omega$, and $\beta>0$.

 ADMM iterations are then written:
\begin{align} 
\theta^{(k+1)}  & = \underset{\theta}{\operatorname{argmin}~} \mathcal{L}_{\beta^{(k)}}(\theta,z^{(k)},\Psi^{(k)}), \label{eq:15} \\[-0.5em]
z^{(k+1)}  & = \underset{z}{\operatorname{argmin}~} \mathcal{L}_{\beta^{(k)}}(\theta^{(k+1)},z,\Psi^{(k)}), \label{eq:16} \\[-0.5em]
\Psi^{(k+1)} & = \!\Psi^{(k)} + \beta^{(k)} \left( \nabla z^{(k+1)} -\theta^{(k+1)}\right). 
\end{align}
where $\beta^{(k)}$ can be determined automatically~\cite{Boyd2011}.

Problem~\eqref{eq:15} is a pixelwise non-trivial optimization problem which is solved using a Newton method with an L-BFGS stepsize. As for~\eqref{eq:16}, it is discretized by first-order, forward finite differences with Neumann boundary condition. This yields a linear least-squares problem whose normal equations provide a symmetric, positive definite (semi-definite if $\mu = 0$) linear system. It is sparse, but too large to be solved directly: conjugate gradient iterations should be preferred.  In our experiments, the algorithm stops when the relative variation of the energy in~\eqref{eq:13} falls below $10^{-3}$.  

This ADMM algorithm can be interpreted as follows. During the $\theta$-update~\eqref{eq:15}, local estimation of the gradient (\ie, of the surface normals) is carried out based on SFS, while ensuring that the gradient map is smooth and  close to the gradient of the current depth map. Unlike in the fixed point approach~\cite{Or-El2015}, local surface orientation is inferred from the whole model~\eqref{eq:2}, and not only from its linear part. In practice, we observed that this yields a much more stable algorithm (see Figure~\ref{fig:5}). In the $z$-update~\eqref{eq:16}, these surface normals are integrated into a new depth map, which should stay close to the prior. 

Given the non-convexity of the shading term $\mathcal{E}$ and of the smoothness term~$\mathcal{S}$, convergence of the ADMM algorithm is not guaranteed. However, in practice we did not observe any particular convergence-related issue, so we conjecture that a convergence proof could eventually be provided, taking inspiration from the recent studies on non-convex ADMM~\cite{li2015global,hong2016convergence}. However, we leave this as future work and focus in this proof of concept work on sketching the approach and providing preliminary empirical results. 

The next section shows quantitatively the effectiveness of the proposed ADMM algorithm for solving SFS under natural illumination, and introduces qualitative results on real-world datasets.

\section{Experiments}
\label{sec:5}

\subsection{Quantitative Evaluation of the Proposed SFS Framework}

We first validate in Figure~\ref{fig:experiments} the ability of the proposed variational framework to solve SFS under natural illumination \ie, to solve~\eqref{eq:1}. Our approach is compared against SIRFS~\cite{Barron2015}, which is the only method for SFS under natural illumination whose code is freely available. For fair comparison, albedo and lighting estimations are disabled in SIRFS, and its multi-scale strategy is used, in order to avoid the artifacts shown in Figure~\ref{fig:5}. 

Since we only want to compare here the ability of both methods to explain a shaded image, our regularization terms are disabled ($\mu = \nu = 0$), as well as those from SIRFS. To quantify this ability, we measure the RMSE between the input images and the reprojected ones, as advised in~\cite{Durou2008}. 

\begin{figure*}
  \setlength{\tabcolsep}{0.0em} 
{\renewcommand{\arraystretch}{0.6}
\begin{tabular}{ccccc}
    \multicolumn{2}{c}{\includegraphics[width=0.18\linewidth,trim = 8em 6em 7em 4em,clip]{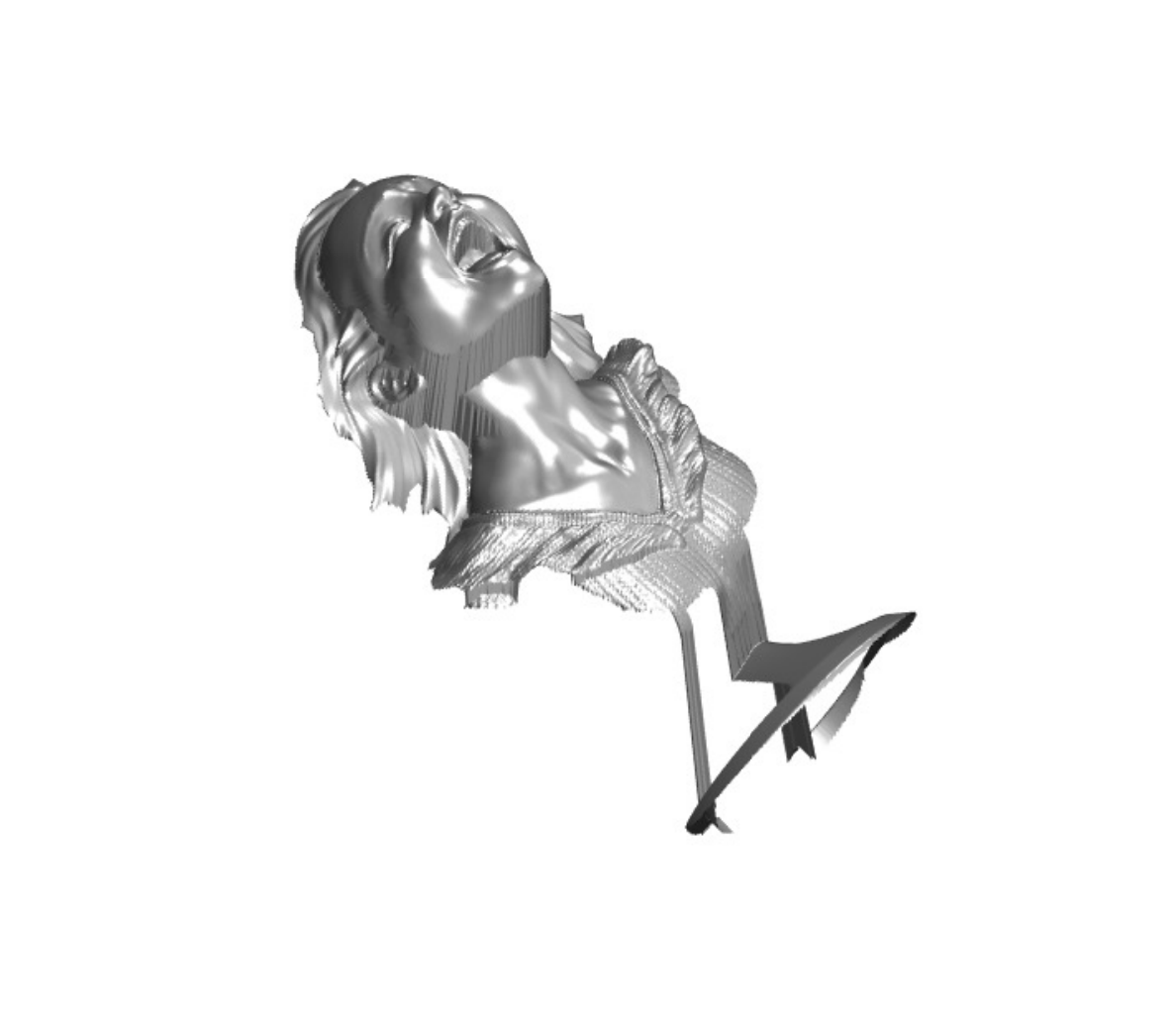}} &
    \includegraphics[height=0.18\linewidth]{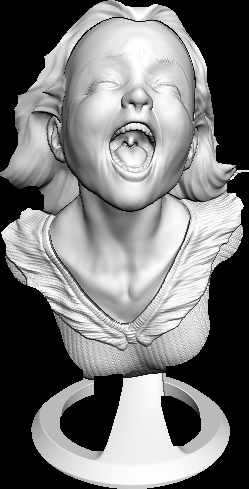}
    \includegraphics[height=0.07\linewidth]{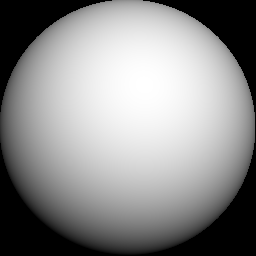} &
    \includegraphics[height=0.18\linewidth]{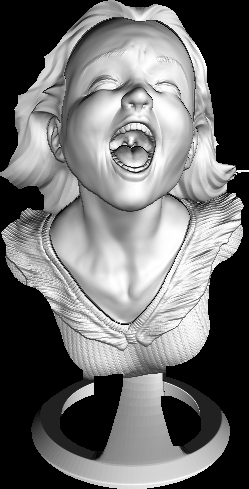} 
    \includegraphics[height=0.07\linewidth]{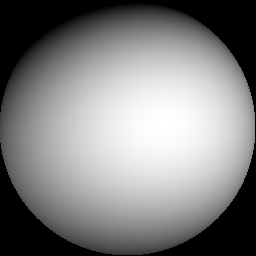} &  
    \includegraphics[height=0.18\linewidth]{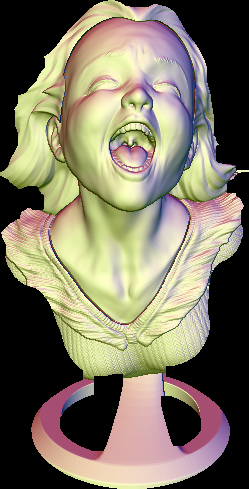} 
    \includegraphics[height=0.07\linewidth]{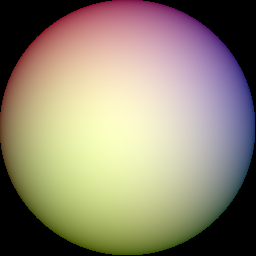} \\
 \multicolumn{2}{c}{ {\small Ground truth}} & {\small Greylevel,} & {\small Greylevel,} & {\small Colored,} \\
 \multicolumn{2}{c}{ ~} & {\small first-order} & {\small second-order} & {\small second-order} \\ 
 \multicolumn{2}{c}{ ~} & {\small lighting $\mathbf{l}_1$~\eqref{eq:18}} & {\small  lighting $\mathbf{l}_2$~\eqref{eq:19}} & {\small lighting $\mathbf{l}_3$~\eqref{eq:20}} \\[1em]  
  \multirow{3}{*}{
  \begin{tabular}{c}
  \includegraphics[width=0.15\linewidth,trim = 7em 6em 6em 4em,clip]{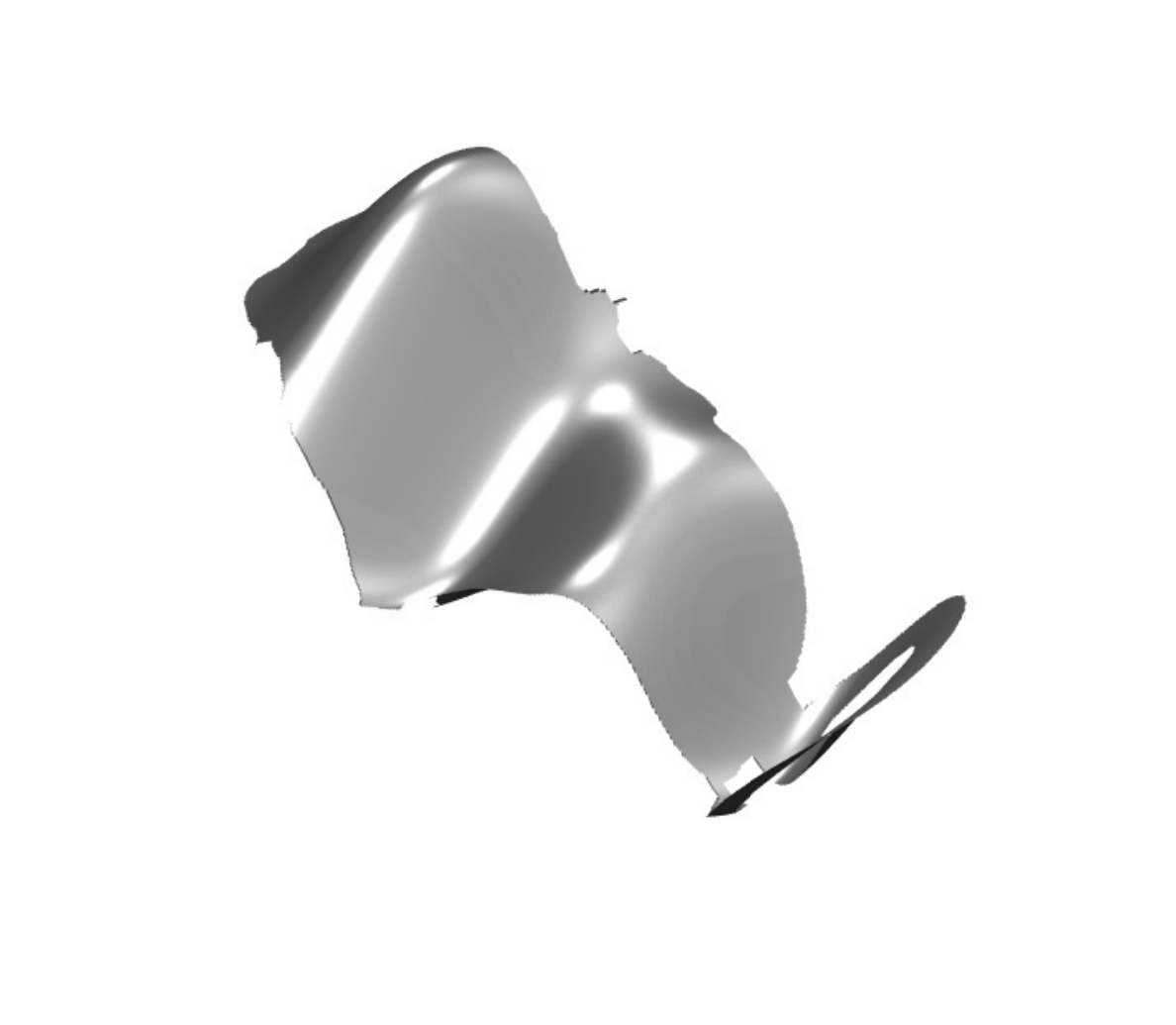} \\ Non-realistic \quad\\ initialization \quad
  \end{tabular}} &
 \parbox[t]{2.5mm}{\rotatebox[origin=c]{90}{Ours}} &
 \includegraphics[height=0.15\linewidth,trim = 8em 6em 7em 4em,clip]{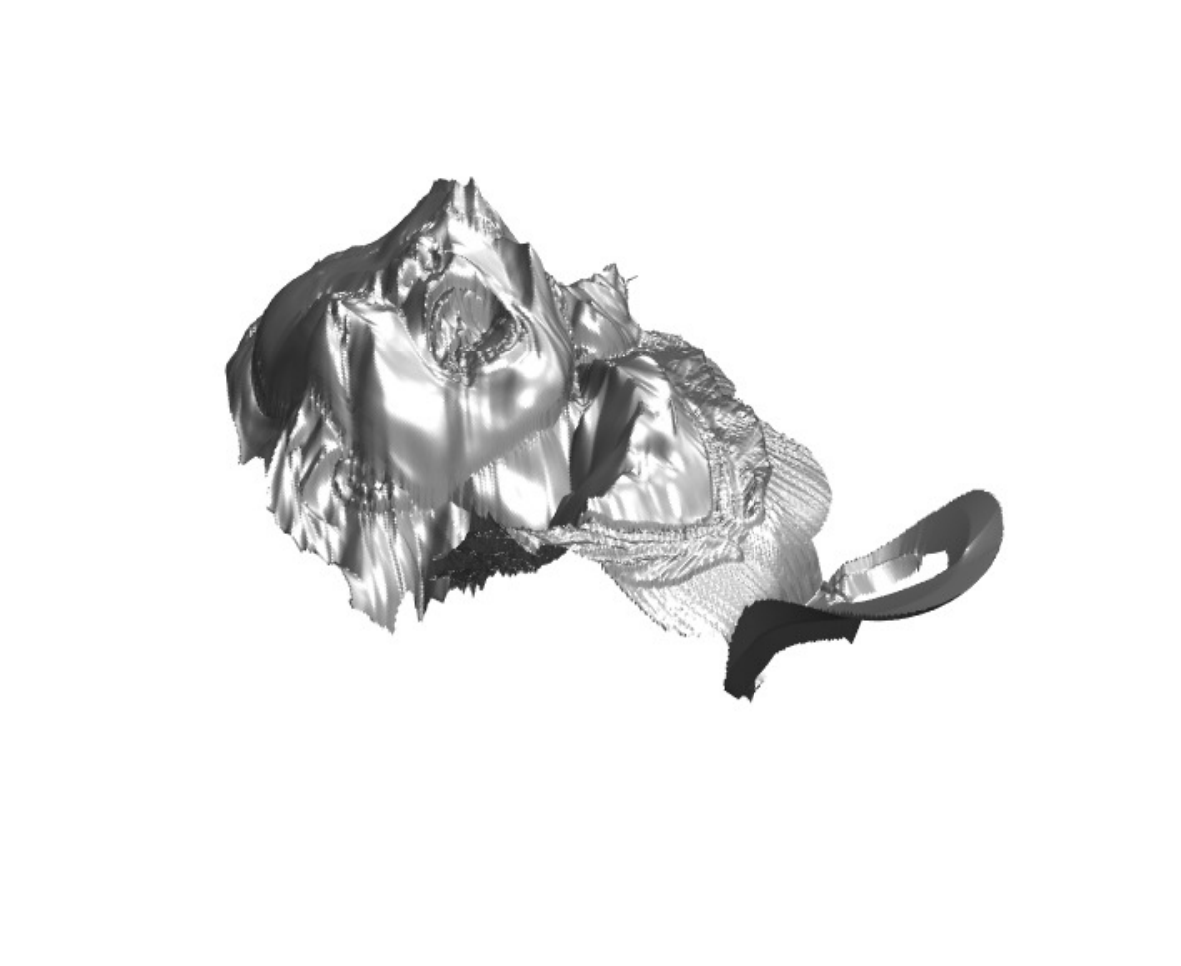}
 \includegraphics[height=0.15\linewidth]{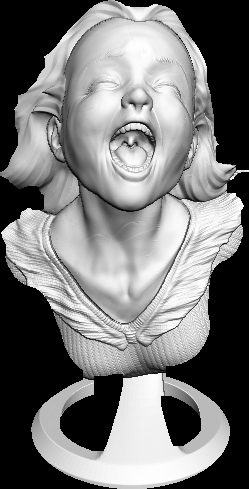} &
 \includegraphics[height=0.15\linewidth,trim = 6em 6em 5em 4em,clip]{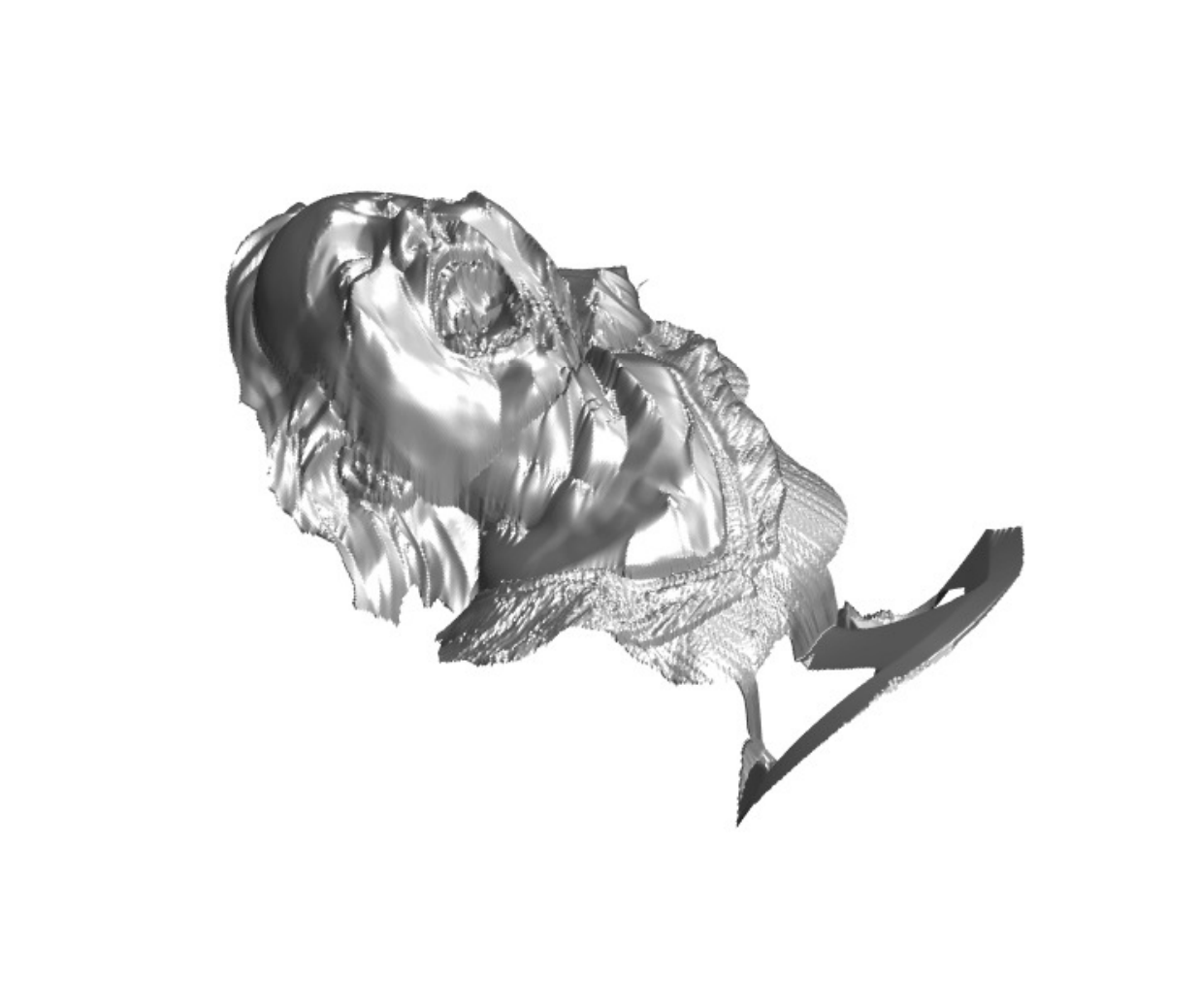} 
\includegraphics[height=0.15\linewidth]{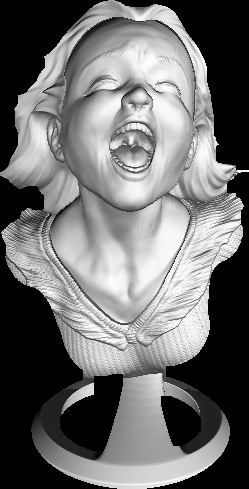} &
\includegraphics[height=0.15\linewidth,trim = 6em 6em 5em 4em,clip]{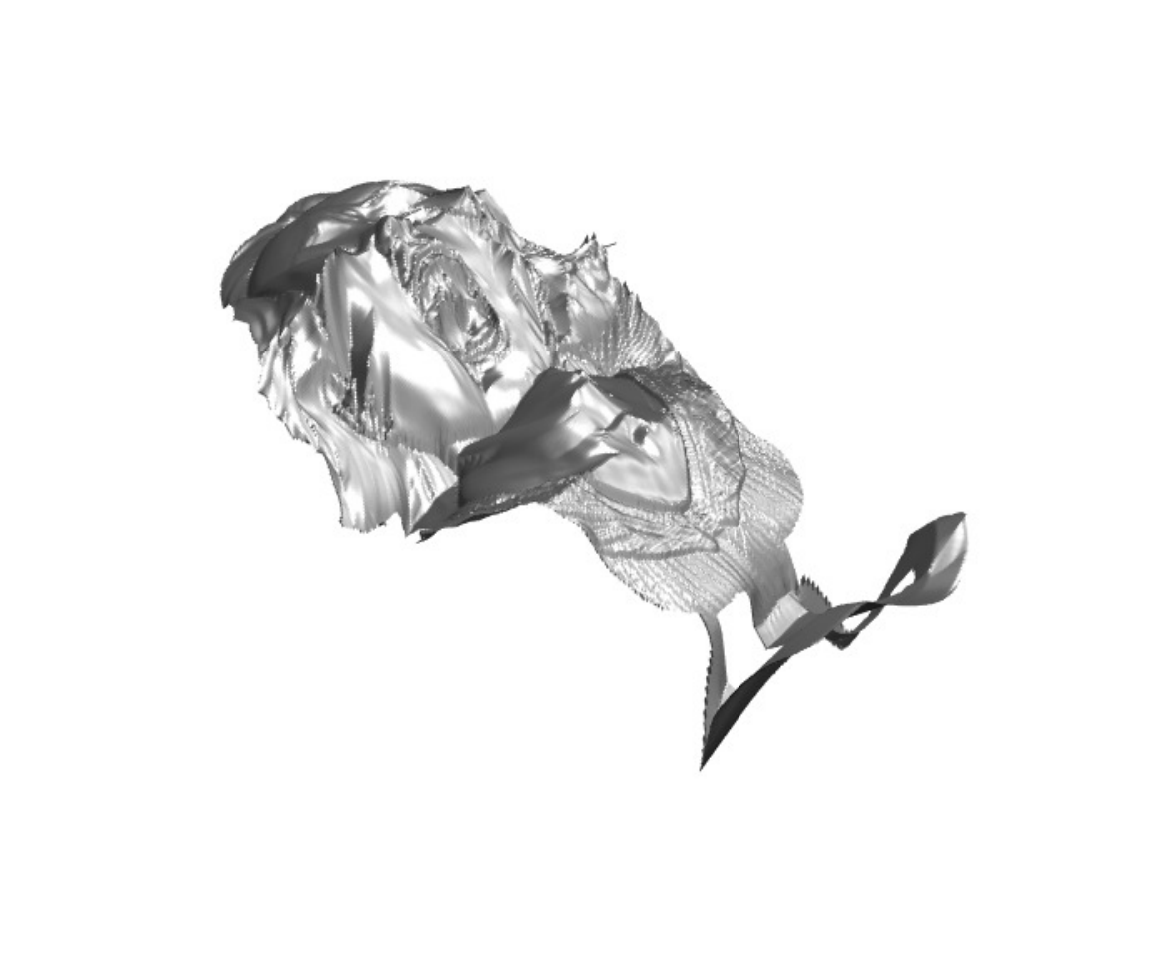} 
\includegraphics[height=0.15\linewidth]{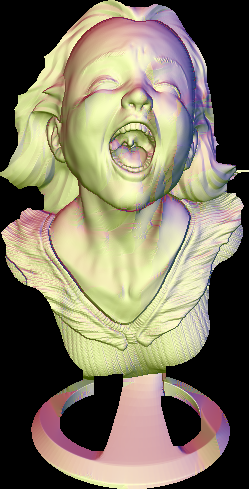} \\
 & & {\small RMSE $= \textbf{0.02}$} & {\small RMSE $= \textbf{0.03}$} & {\small RMSE $= \textbf{0.05}$} \\
 & 
 \parbox[t]{2.5mm}{\rotatebox[origin=c]{90}{SIRFS}} &
 \includegraphics[height=0.15\linewidth,trim = 6em 6em 6em 4em,clip]{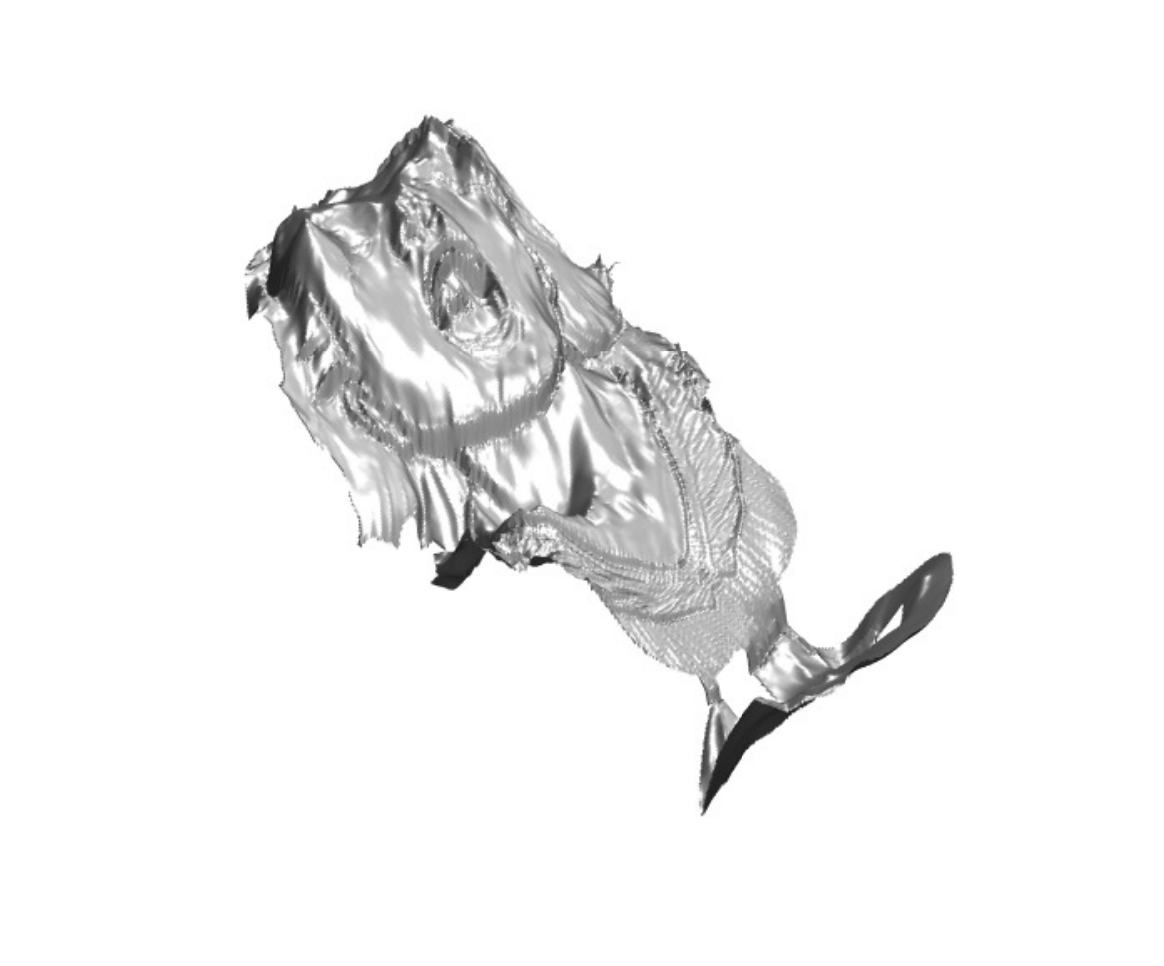}
 \includegraphics[height=0.15\linewidth]{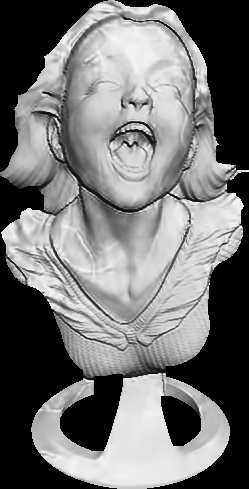} &
 \includegraphics[height=0.15\linewidth,trim = 6em 6em 6em 4em,clip]{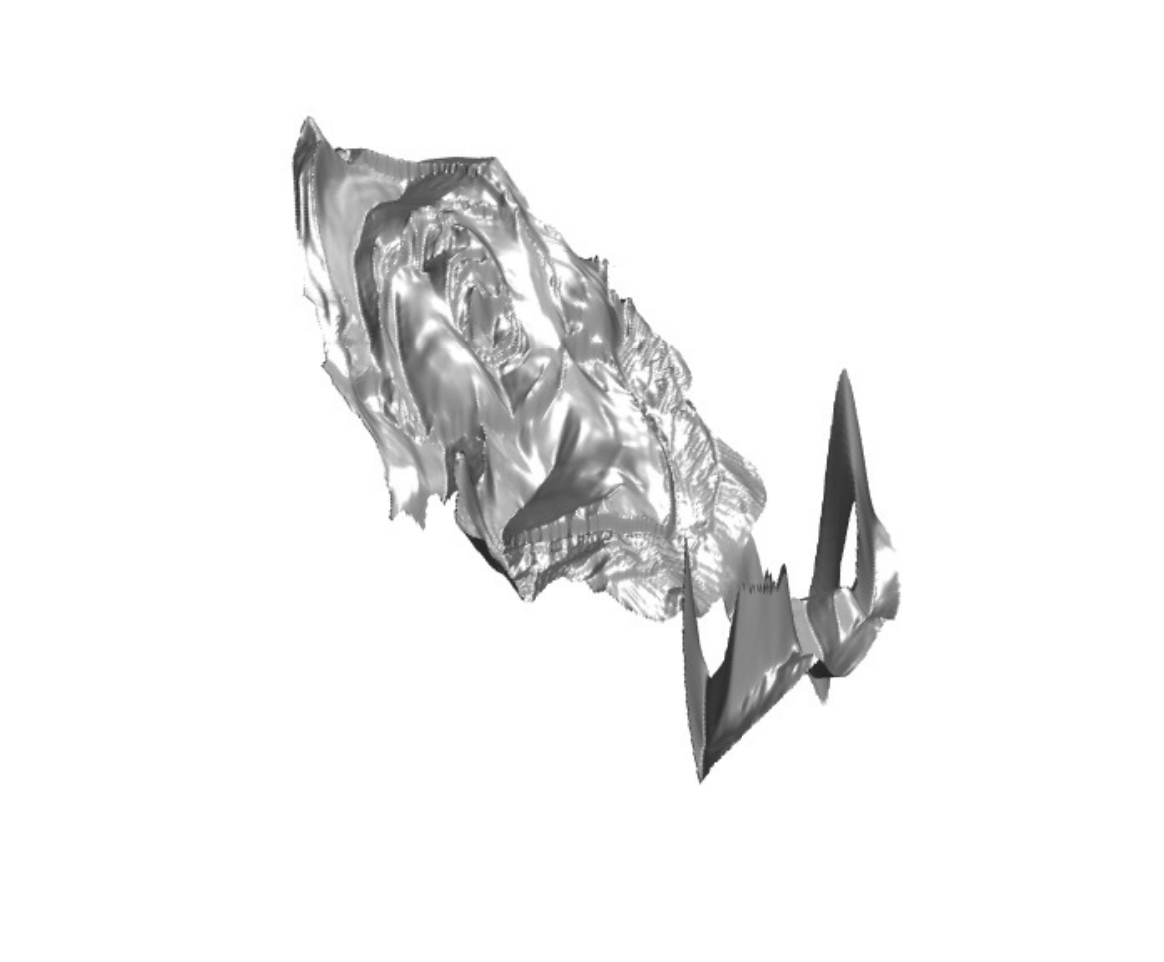} 
\includegraphics[height=0.15\linewidth]{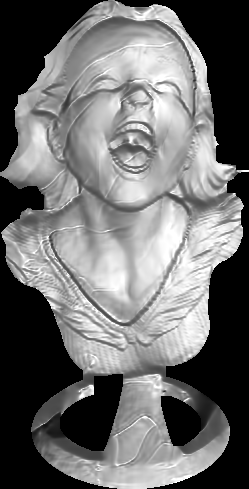} &
\includegraphics[height=0.15\linewidth,trim = 6em 6em 6em 4em,clip]{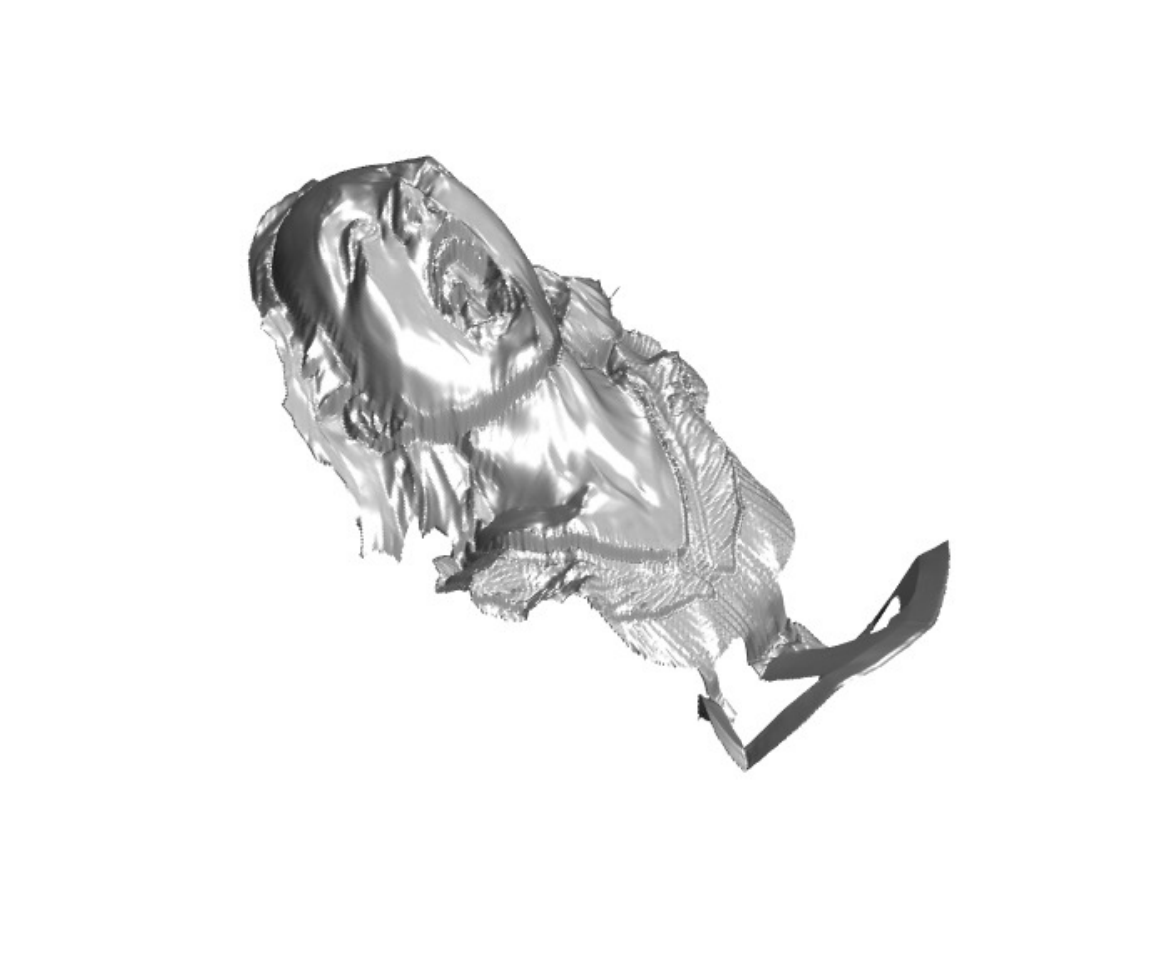} 
\includegraphics[height=0.15\linewidth]{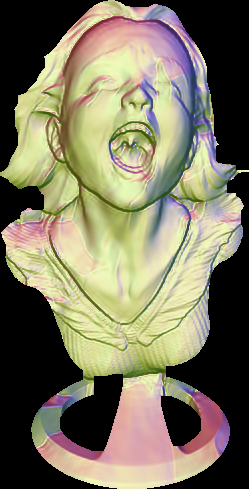} \\
 & & {\small RMSE $= 0.08$} & {\small RMSE $= 0.10$} & {\small RMSE $= 0.08$} \\[1em]
 \multirow{2}{*}{
 \begin{tabular}{c}
  \includegraphics[width=0.15\linewidth,trim = 7em 6em 6em 4em,clip]{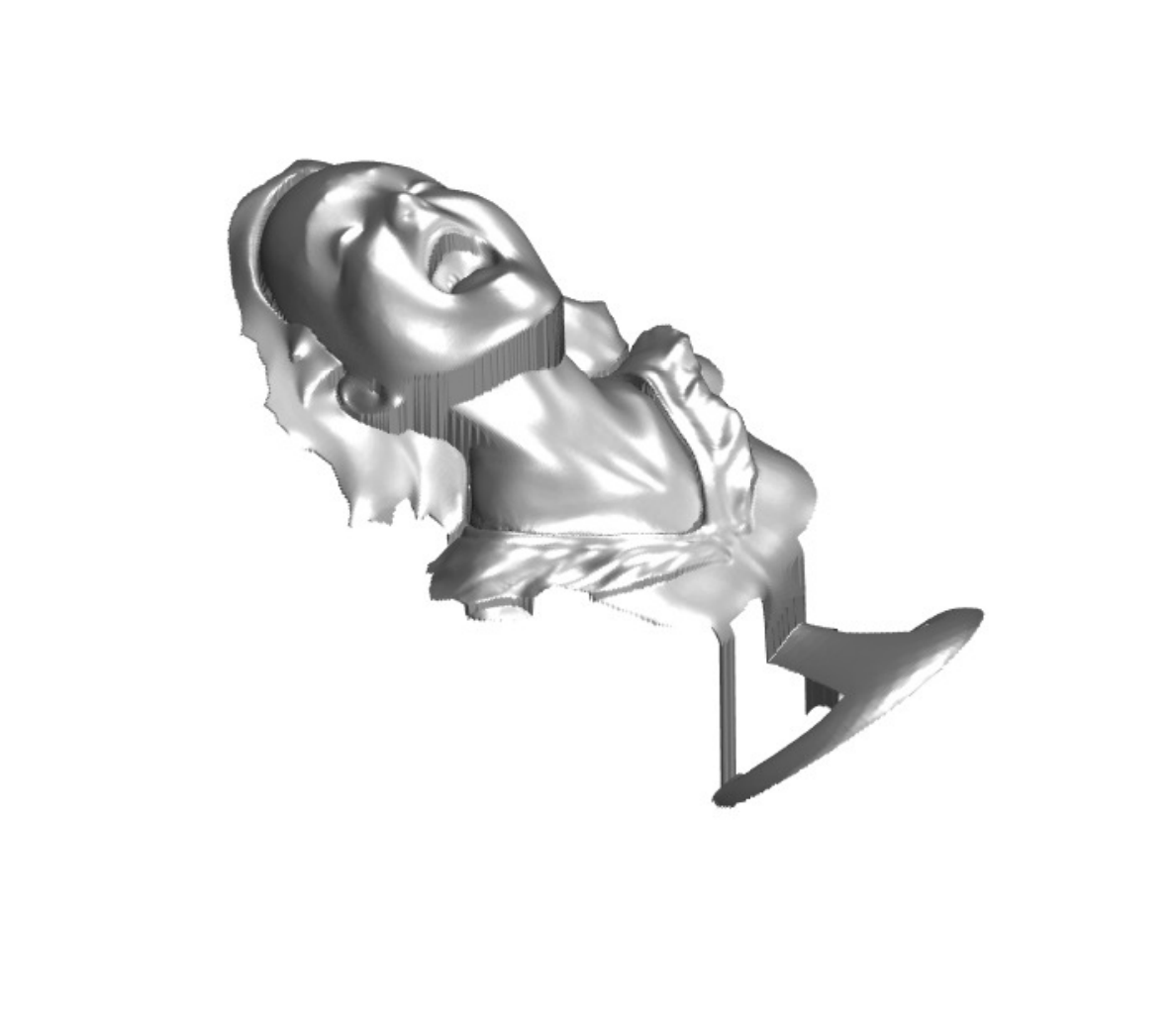} \\
  Realistic \\
   initialization \quad \end{tabular}} &
 \parbox[t]{2.5mm}{\rotatebox[origin=c]{90}{Ours}} &    
 \includegraphics[height=0.15\linewidth,trim = 6em 6em 6em 4em,clip]{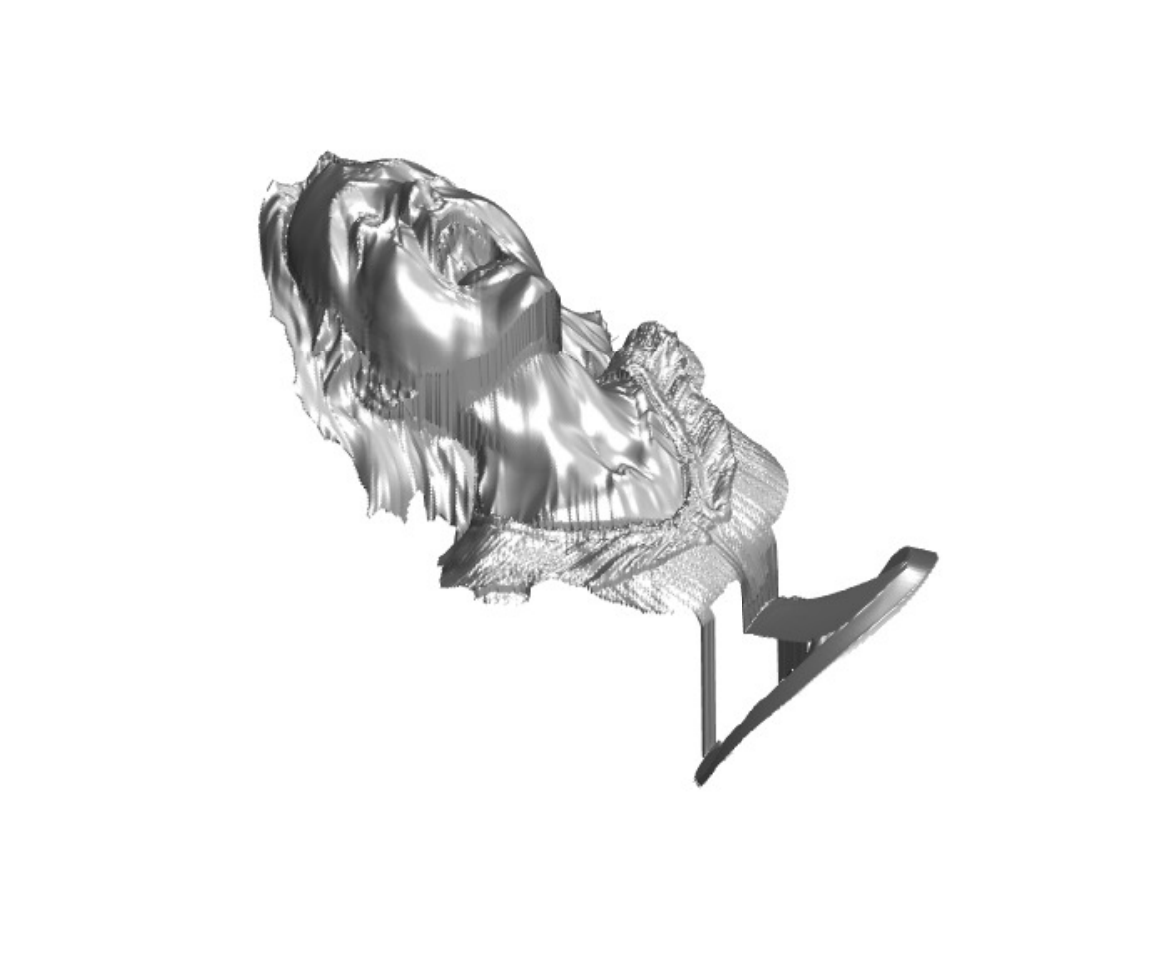}
 \includegraphics[height=0.15\linewidth]{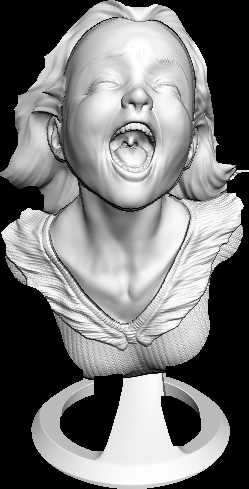} & 
 \includegraphics[height=0.15\linewidth,trim = 6em 6em 7em 4em,clip]{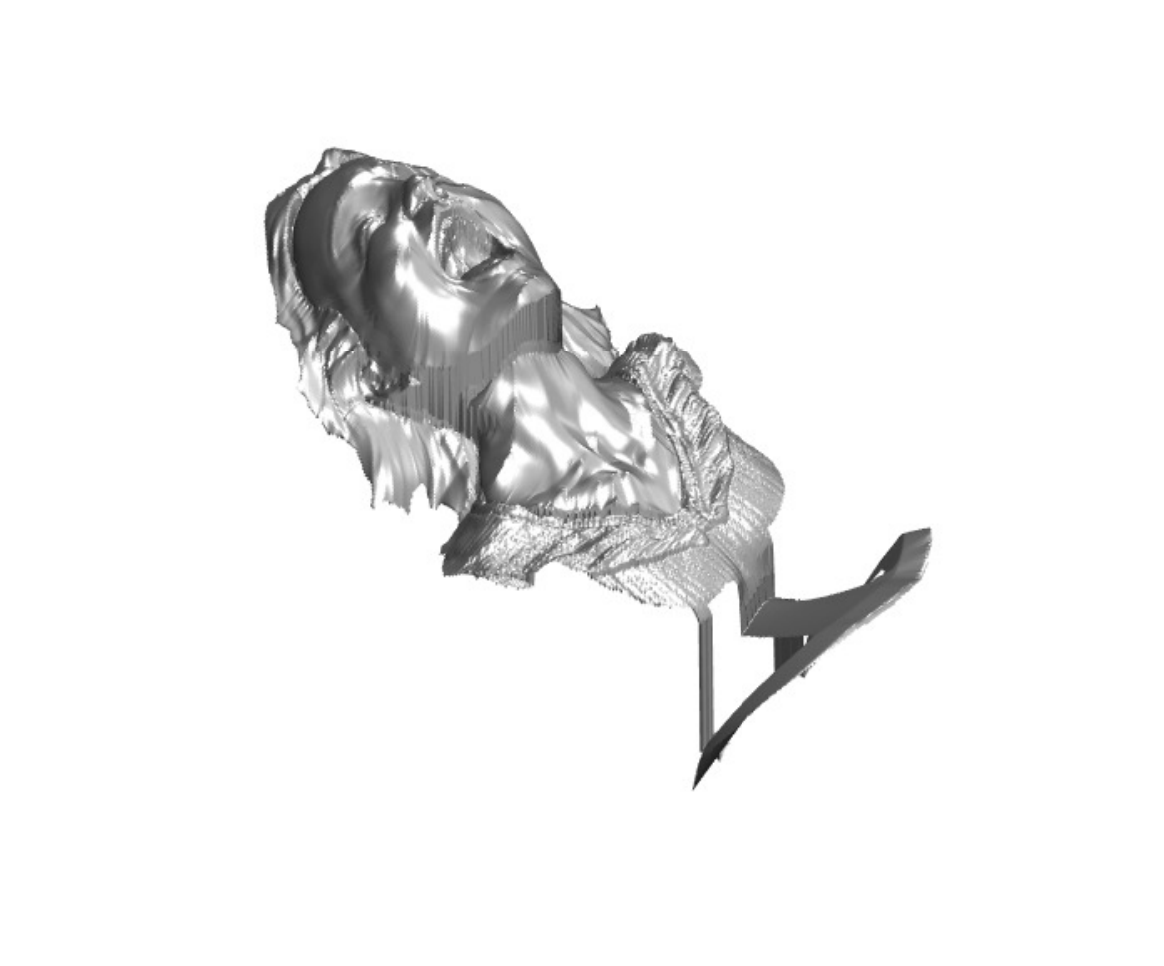} 
\includegraphics[height=0.15\linewidth]{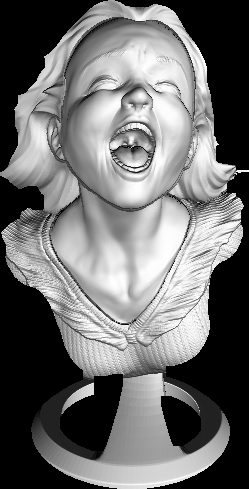} &
\includegraphics[height=0.15\linewidth,trim = 6em 6em 7em 4em,clip]{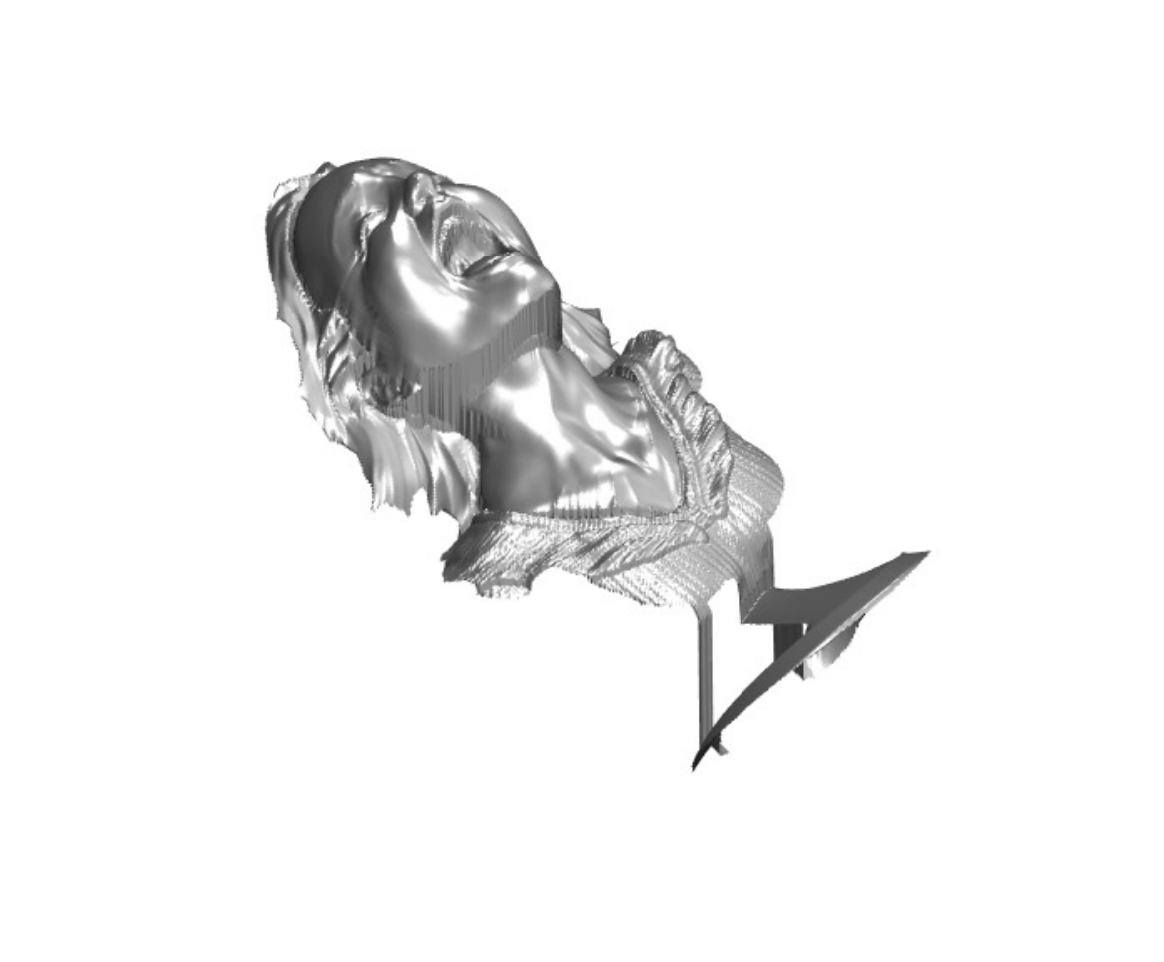} 
\includegraphics[height=0.15\linewidth]{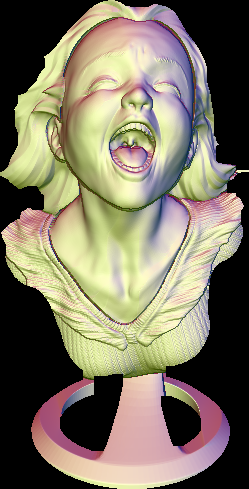} \\
 & & {\small RMSE $= \textbf{0.03}$} & {\small RMSE $= \textbf{0.03}$} & {\small RMSE $= \textbf{0.03}$} \\ 
  & 
 \parbox[t]{2.5mm}{\rotatebox[origin=c]{90}{SIRFS}} &  
 \includegraphics[height=0.15\linewidth,trim = 6em 6em 6em 4em,clip]{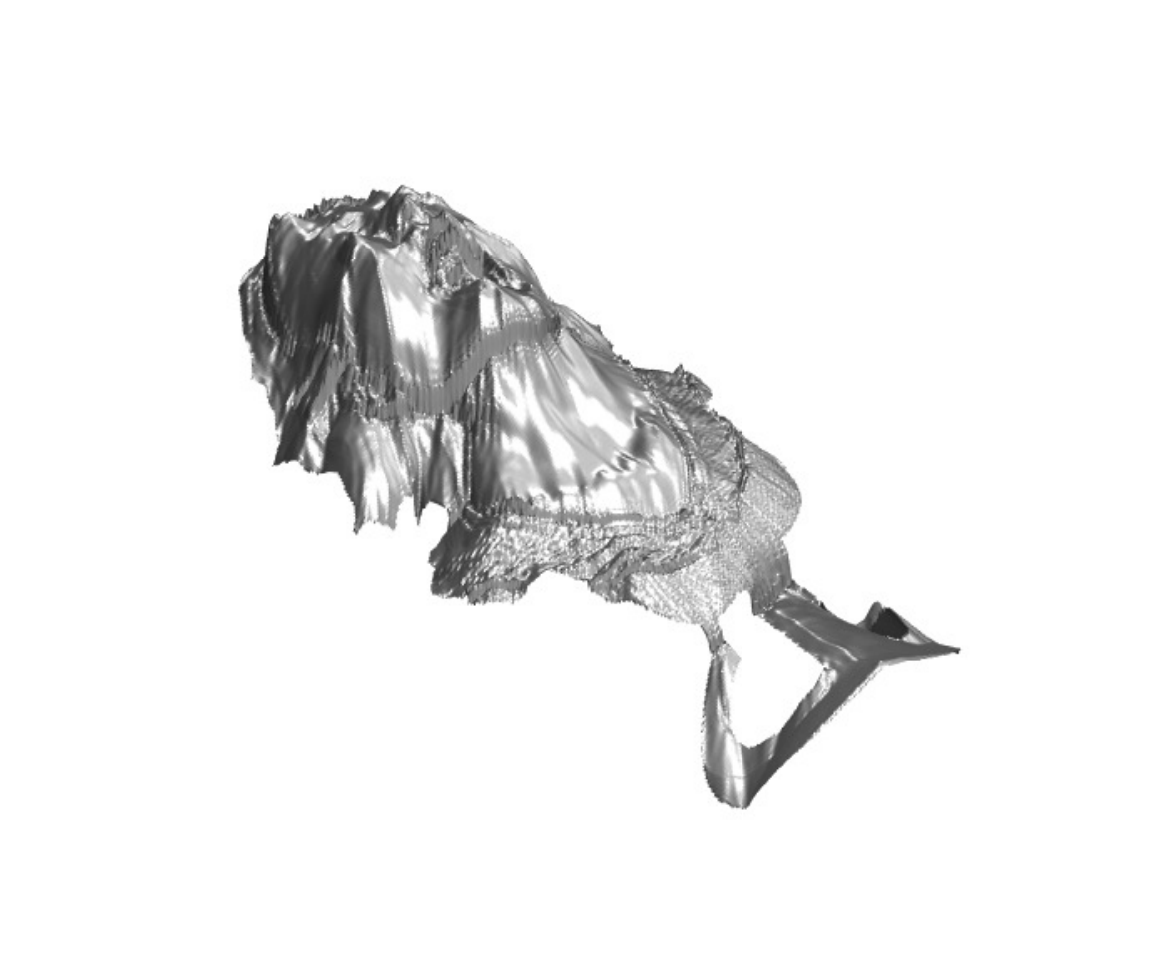}
 \includegraphics[height=0.15\linewidth]{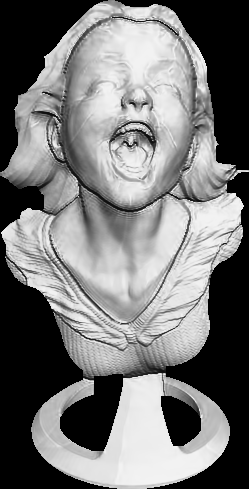} &
 \includegraphics[height=0.15\linewidth,trim = 6em 6em 6em 4em,clip]{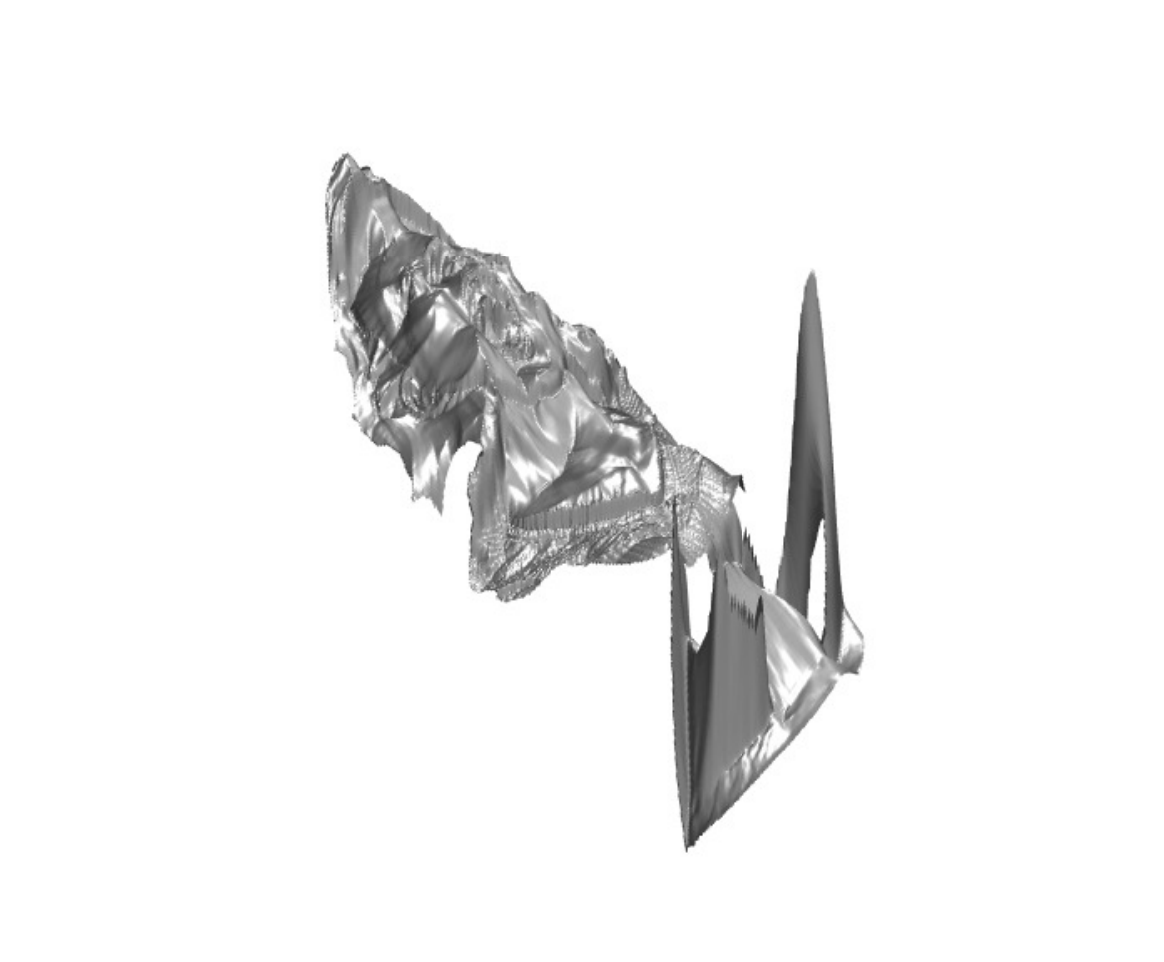} 
\includegraphics[height=0.15\linewidth]{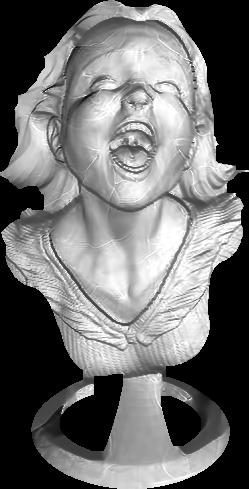} &
\includegraphics[height=0.15\linewidth,trim = 6em 6em 5em 4em,clip]{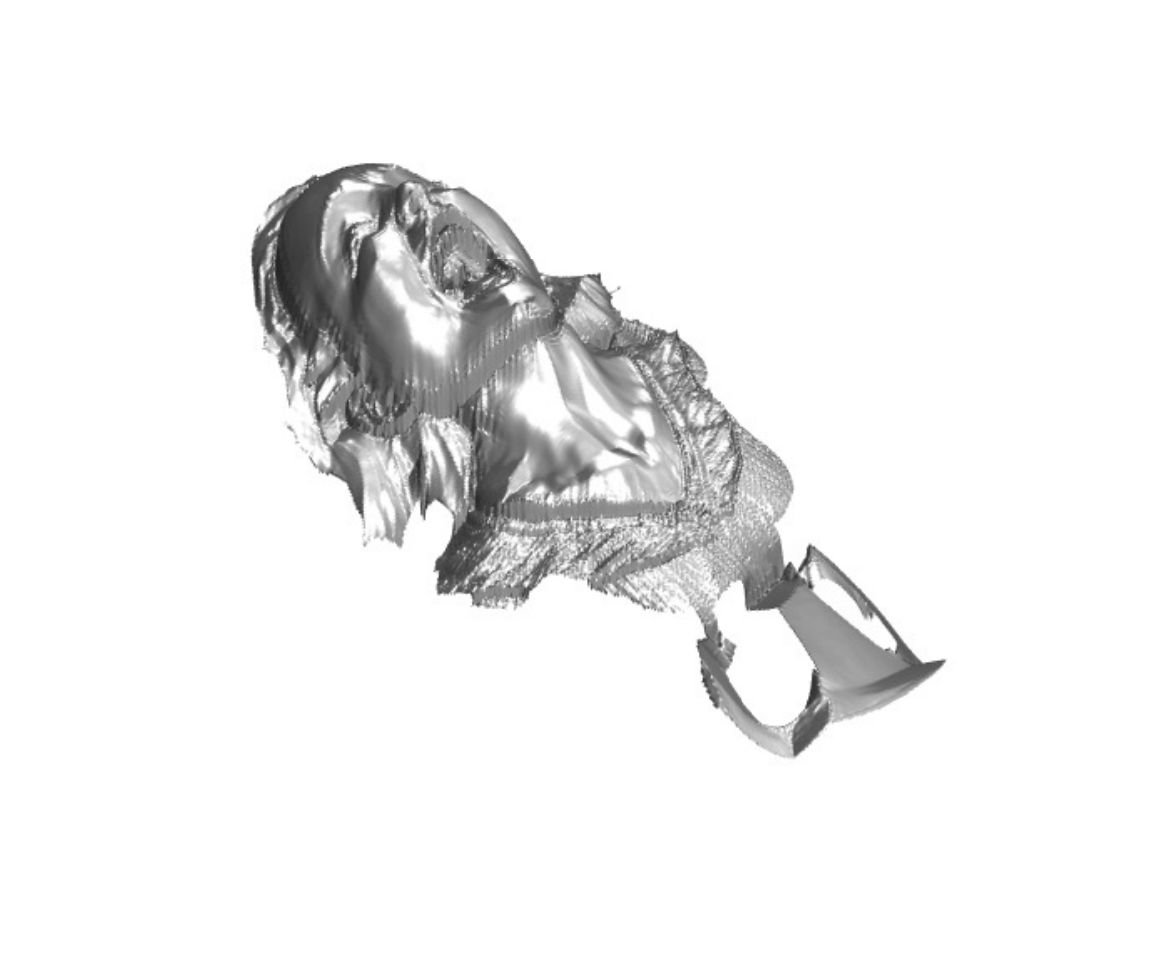} 
\includegraphics[height=0.15\linewidth]{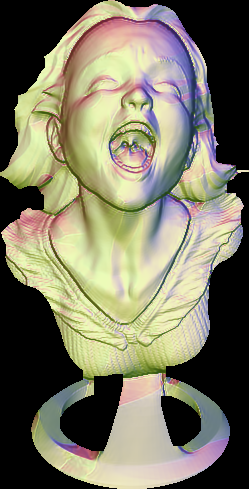} \\
 & & {\small RMSE $= 0.06$} & {\small RMSE $= 0.07$} & {\small RMSE $= 0.07$} \\ 
 \\
  \end{tabular}
  }
  \caption{Evaluation of our SFS approach against the multi-scale one from SIRFS~\cite{Barron2015}, in three different lighting situations and using two different initial 3D-shapes (the first one is Matlab's ``peaks'' function, while the second one is a smoothed version of the ground truth). For each experiment, we show the estimated depth map and the reprojected image, and provide the root mean square error (RMSE) between the input synthetic image and the reprojection (the input images are scaled between $0$ and $1$). Our variational framework solves SFS under natural illumination more accurately than state-of-the-art.} 
  \label{fig:experiments}
\end{figure*}

To create these datasets, we use the public domain ``Joyful Yell'' 3D-shape, considering orthographic projection for fair comparison (SIRFS cannot handle perspective projection). Noise-free images are simulated under three lighting scenarios. We first consider greylevel images, with a single-order and then a second-order lighting vectors. Eventually, we consider a colored, second-order lighting vector. These lighting vectors are defined, respectively, by:
\begin{align}
  &\bl_1 =  [0.1,-0.25,-0.7,0.2,0,0,0,0,0]^\top, \label{eq:18} \\
  &\bl_2 = [0.2,0.3,-0.7,0.5,-0.2,-0.2,0.3,0.3,0.2]^\top \label{eq:19}, \\ 
&  \bl_3 = \begin{bmatrix}
    -0.2&-0.2&-1&0.4&0.1&-0.1&-0.1&-0.1&0.05 \\
    0&0.2&-1&0.3&0&0.2&0.1&0&0.1 \\
    0.2&-0.2&-1&0.2&-0.1&0&0&0.1&0
  \end{bmatrix}^{\top}.
  \label{eq:20}
\end{align}

To illustrate the underlying ambiguities, we consider two different initial 3D-shapes: one very different from the ground truth (Matlab's ``peaks'' function), and one close to it (obtained by applying a Gaussian filter to the ground truth). Interestingly, although $\mu = 0$ for the tests in Figure~\ref{fig:experiments}, our method does not drift too much from the latter: the shape is qualitatively satisfactory as soon as a good initialization is available.

In all the experiments, the images are better explained using our framework, which shows that the proposed numerical strategy solves the challenging, highly nonlinear SFS model~\eqref{eq:1} in a more accurate manner than state-of-the-art. Besides, the runtimes of both methods are comparable: a few minutes in all cases (on a standard laptop using Matlab codes), for images having around $150.000$ pixels inside $\Omega$. Unsurprisingly, initialization matters a lot, because of the inherent ambiguities of SFS.  

In Figure~\ref{fig:params}, we illustrate the influence of the hyper-parameters $\mu$ and $\nu$ which control, respectively, the shape prior and the smoothness term. We consider the same dataset as in the second experiment of Figure~\ref{fig:experiments}, but with additive, zero-mean, homoskedastic Gaussian noise on the image and on the depth forming the shape prior (we use the ``Realistic initialization'' as prior). If $\lambda = 1$ and $(\mu,\nu) = (0,0)$, then pure SFS is carried out: high-frequency details are perfectly recovered, but the surface might drift from the initial 3D-shape and interpret image noise as unwanted geometric artifacts. If $\mu \to +\infty$, the initial estimate (which exhibits reasonable low-frequency components, but no geometric detail) is not modified. If $\nu \to +\infty$, then only the minimal surface term matters, hence the result is over-smoothed. In this experiment, we also evaluate the accuracy of the 3D-reconstructions through the mean angular error (MAE) on the normals: it is minimal when the parameters are tuned appropriately, not when the image error (RMSE) is minimal since minimizing the latter comes down to estimating geometric details explaining the image noise.  

The appropriate tuning of $\mu$ and $\nu$ depends on how trustworthy the image and the shape prior are. Typically, in RGB-D sensing, the depth may be noisier than in this synthetic experiment: in this case a low value of $\mu$ should be used. On the other hand, natural illumination is generally colored, so the three image channels provide redundant information: regularization is less important and a low value of the smoothness parameter $\nu$ can be used. We found that $(\lambda,\mu,\nu) = (1,1,5.10^{-5})$ provides qualitatively nice results in all our real-world experiments. 

\begin{figure}[!ht]
\centering
\begin{tabular}{ccccc}
 & $\nu = 10^{-6}$ & $\nu = 10^{-3}$ & $\nu = 1$ & \\
\multirow{2}{*}{ \includegraphics[valign=m,width = 0.13\linewidth]{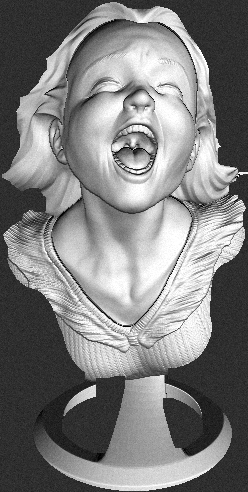} }  
  \quad & \quad  \includegraphics[valign=m,width = 0.13\linewidth]{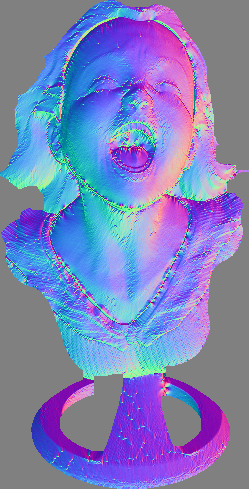} \quad&\quad
 \includegraphics[valign=m,width = 0.13\linewidth]{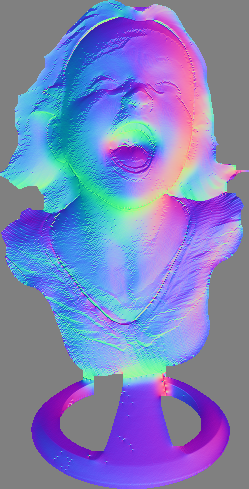} \quad&\quad
 \includegraphics[valign=m,width = 0.13\linewidth]{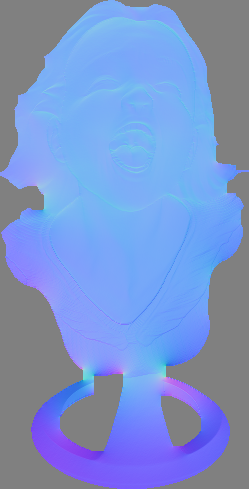} \quad &\quad $\mu = 10^{1}$ \\
 \quad&\quad {\small RMSE $ = \textbf{0.06}$} \quad&\quad {\small RMSE $ = 0.12$} \quad&\quad {\small RMSE $ = 0.29$} \quad&\quad \\
 \quad&\quad {\small MAE $ = 14.77$} \quad&\quad {\small MAE $ = 18.56$} \quad&\quad {\small MAE $ = 41.91$} \quad&\quad \\  
 \quad&\quad \includegraphics[valign=m,width = 0.13\linewidth]{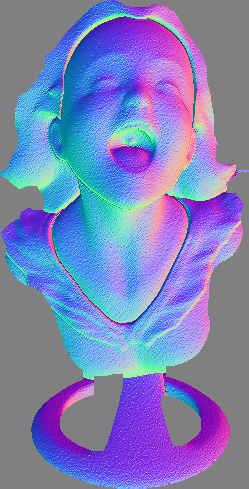} \quad&\quad
 \includegraphics[valign=m,width = 0.13\linewidth]{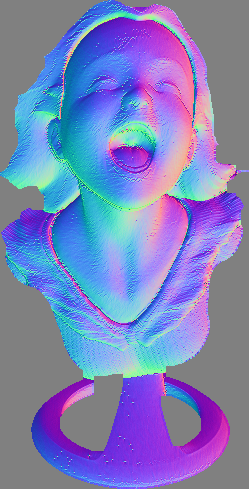} \quad&\quad
 \includegraphics[valign=m,width = 0.13\linewidth]{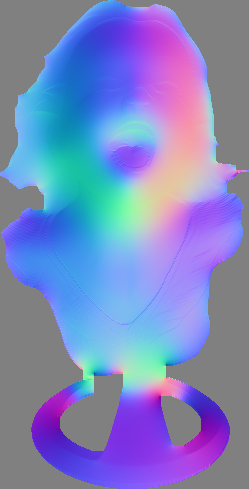} \quad&\quad $\mu = 10^{3}$ \\  
\multirow{2}{*}{\includegraphics[valign=m,width = 0.13\linewidth]{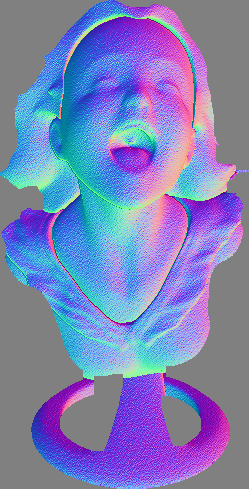} }  
 \quad&\quad {\small RMSE $ = 0.08$} \quad&\quad {\small RMSE $ = 0.13$} \quad&\quad {\small RMSE $ = 0.20$}  \quad&\quad \\
 \quad&\quad {\small MAE $ = 18.34$} \quad&\quad {\small MAE $ = \textbf{13.46}$} \quad&\quad {\small MAE $ = 24.43$} \quad&\quad \\   
\quad&\quad  \includegraphics[valign=m,width = 0.13\linewidth]{Fig/SelectoImitationCoca/normal_mu_10_nu_2} \quad&\quad
 \includegraphics[valign=m,width = 0.13\linewidth]{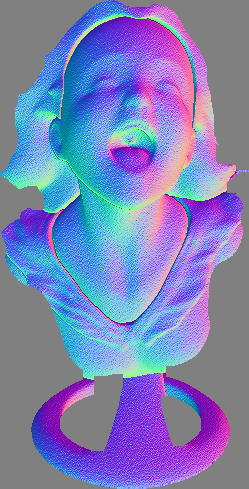} \quad&\quad
 \includegraphics[valign=m,width = 0.13\linewidth]{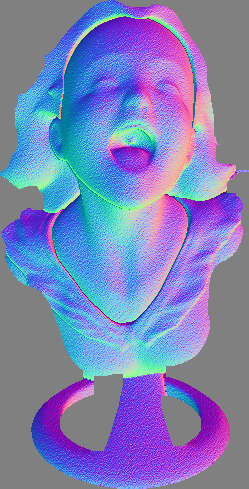} \quad&\quad $\mu = 10^{5}$ \\
 \quad&\quad {\small RMSE $ = 0.17$} \quad&\quad {\small RMSE $ = 0.17$} \quad&\quad {\small RMSE $ = 0.16$} \quad&\quad \\
 \quad&\quad {\small MAE $ = 21.91$} \quad&\quad {\small MAE $ = 21.91$} \quad&\quad {\small MAE $ = 21.02$} \quad&\quad \\         
\end{tabular}
  \caption{Left: input noisy image ($\sigma_I = 2\%$ of the maximum greylevel) and noisy prior shape ($\sigma_z = 0.2 \%$ of the maximum depth), represented by a normal map to emphasize the details. Right: estimated shape with $\lambda = 1$ and various values of $\mu$ and $\nu$. The RMSE between the image and the reprojection is minimal when $\mu$ and $\nu$ are minimal, but the mean angular error (MAE, in degrees) between the estimated shape and the ground truth one is not.}
  \label{fig:params}
\end{figure}

\subsection{Qualitative Evaluation on Real-world Datasets}
 
The importance of initialization is further confirmed in the top rows of Figures~\ref{fig:teaser} and~\ref{fig:cushion_SFS}. In these experiments, our SFS method  ($\mu = \nu = 0$) is evaluated, under perspective projection,  on real-world datasets obtained using an RGB-D sensor~\cite{Han2013}, considering a fronto-parallel surface as initialization. Although fine details are revealed, the results present an obvious low-frequency bias, and artifacts due to the image noise occur. This illustrates both the inherent ambiguities of SFS, and the need for depth regularization.   

\begin{figure}[!ht]
\centering
\begin{tabular}{ccc}
    \includegraphics[height = 0.25\linewidth]{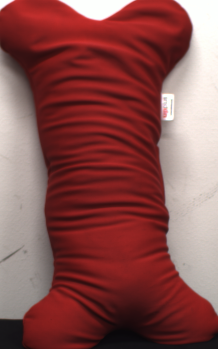} \hspace*{-3.3em}\includegraphics[height = 0.08\linewidth]{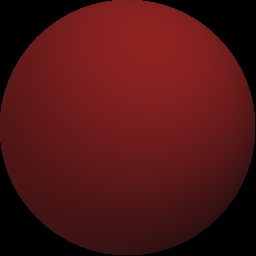}  &\, 
    \includegraphics[height = 0.25\linewidth]{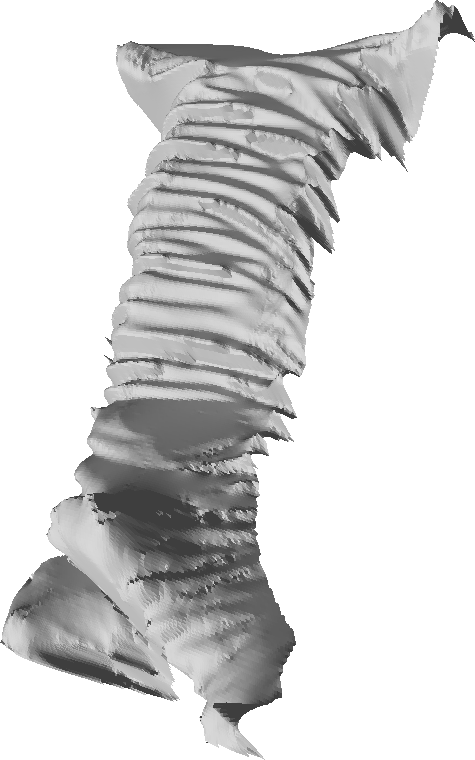} &\, 
    \includegraphics[height = 0.25\linewidth]{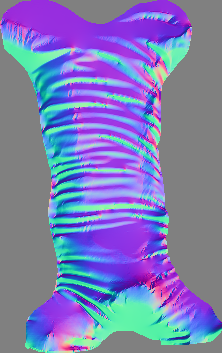} \\
 {\small Input real image with illumination~\cite{Han2013} } \qquad  & \multicolumn{2}{c}{\small SFS result (no regularization)}\\
 & \multicolumn{2}{c}{ ($(\lambda,\mu,\nu)=(1,0,0)$)}\\[.5em]
\end{tabular}
 \begin{tabular}{cccc}
    \includegraphics[height = 0.25\linewidth]{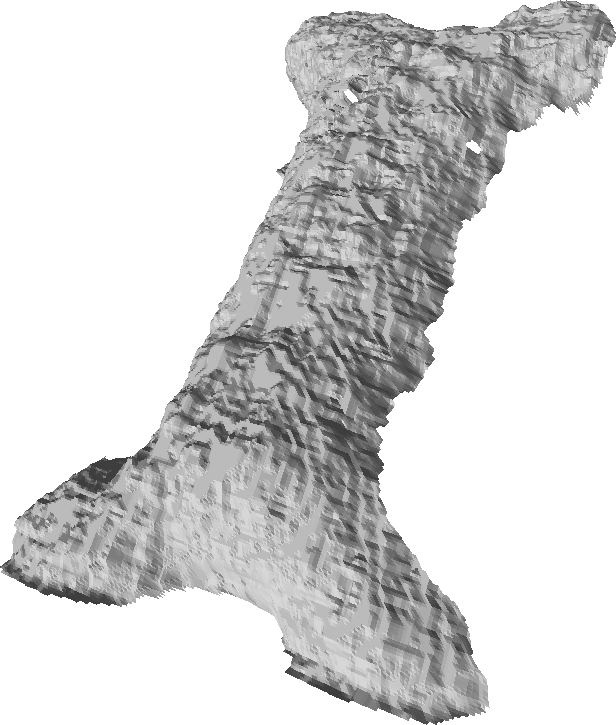}  &\, 
    \includegraphics[height = 0.25\linewidth]{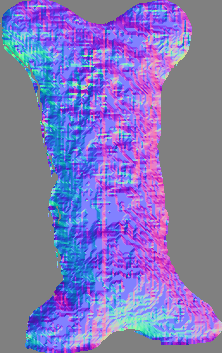}  &\quad     
    \includegraphics[height = 0.25\linewidth]{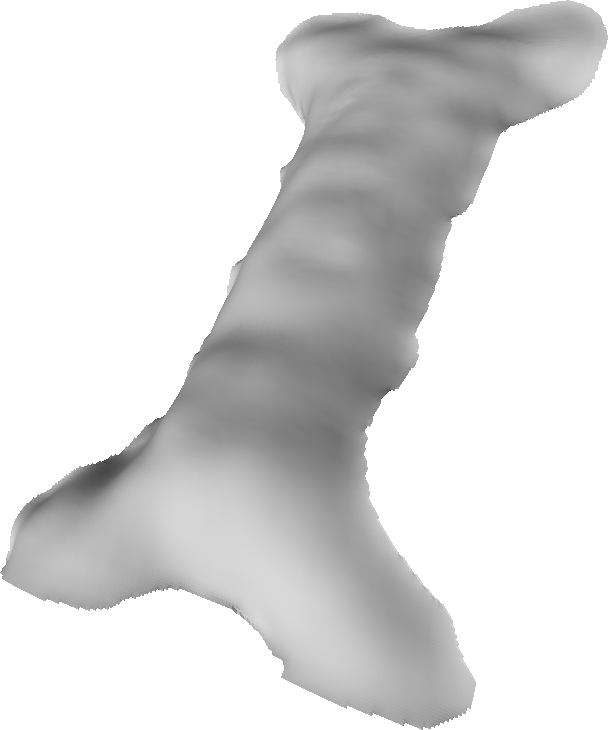} &\,
    \includegraphics[height = 0.25\linewidth]{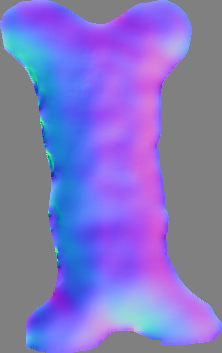} \\
\multicolumn{2}{c}{\small Noisy input shape and normals~\cite{Han2013} }    \qquad   & \multicolumn{2}{c}{\small Minimal surface denoising (no SFS)}  \\
    & & \multicolumn{2}{c}{\small ($(\lambda,\mu,\nu) = (0,1,5.10^{-5})$) }\\[.5em] 
  \end{tabular}
 \begin{tabular}{cc}
    \includegraphics[height = 0.325\linewidth]{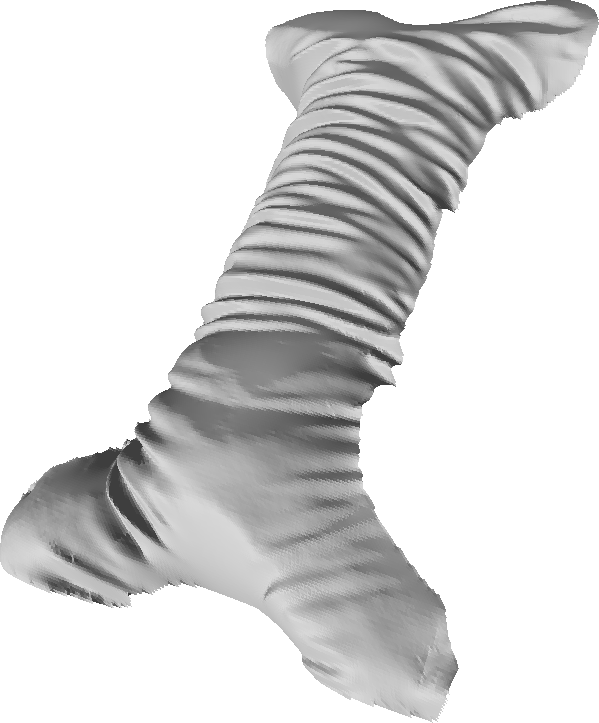}  &\, 
    \includegraphics[height = 0.325\linewidth]{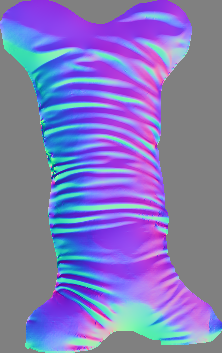}  \\     
    \multicolumn{2}{c}{\small SFS-based denoising and completion ($(\lambda,\mu,\nu) = (1,1,5.10^{-5})$) } 
  \end{tabular}  
  \caption{Results on three computer vision problems: SFS, ``blind'' (not shading-based) depth refinement, and shading-based depth refinement. The shape estimated by SFS is distorted (due to the ambiguities of SFS), and artifacts occur (due to noise), but it contains the fine-scale details. Although the depth map provided by the RGB-D sensor is denoised without considering shading, thin structures are missed. With the proposed method, noise is removed and fine details are revealed.
  }
  \label{fig:cushion_SFS}
\end{figure}

In order to illustrate the practical disambiguation of SFS using a shape prior, we next consider as initialization $z^{(0)}$ and prior $z^0$ the depth provided by the RGB-D sensor. It is both noisy and incomplete, but with our framework it can be denoised, refined and completed in a shading-aware manner, by tuning the parameters $\mu$ (prior) and $\nu$ (smoothness). Second and third rows of Figures~\ref{fig:teaser} and~\ref{fig:cushion_SFS} illustrate the interest of SFS for depth refinement, in comparison with ``blind'' methods based solely on depth regularization~\cite{Graber2015}. 

Eventually, Figure~\ref{fig:figure} demonstrates an application to stereovision, using a real-world dataset from~\cite{Zollhoefer2015}. This time, the initial depth map is obtained by a multi-view stereo (MVS) algorithm~\cite{Jancosek2011}. We estimated lighting from this initial depth map, assuming uniform albedo. Then, we let our algorithm recover the thin geometric structures, which are missed by MVS. The initial depth map contains a lot of missing data and discontinuities, which is challenging for our algorithm: ambiguities arise inside the large holes, and our model favors smooth surfaces. Indeed, the concavities are not well recovered, and the discontinuities are partly smoothed. Still, nice details are recovered, and the overall surface seems reasonable.

\begin{figure}[!ht]
\centering
  \setlength{\tabcolsep}{0.1em}
  \begin{minipage}{0.27\linewidth}
  \centering
  \begin{tabular}{c}
   \includegraphics[width=0.8\linewidth]{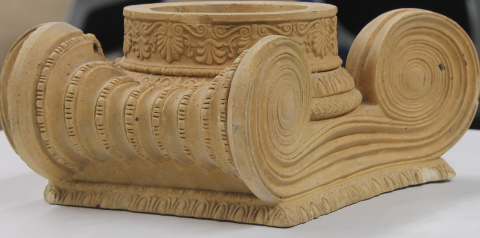}\hspace*{-1.01em}\includegraphics[height=0.1\linewidth]{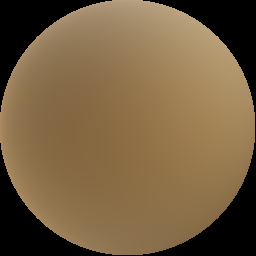} \\ 
   {\small $I_1$} \\
   \includegraphics[width=0.7\linewidth]{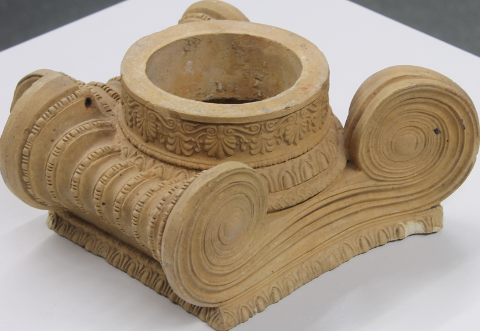}\hspace*{-1.01em}\includegraphics[height=0.1\linewidth]{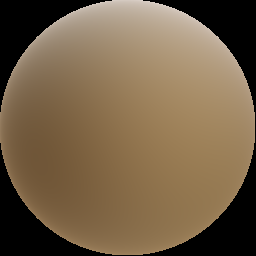} \\
      {\small $I_2$} 
  \end{tabular}
  \end{minipage}
  \begin{minipage}{0.72\linewidth}
  \centering
  \begin{tabular}{cc}
   \includegraphics[width=0.495\linewidth]{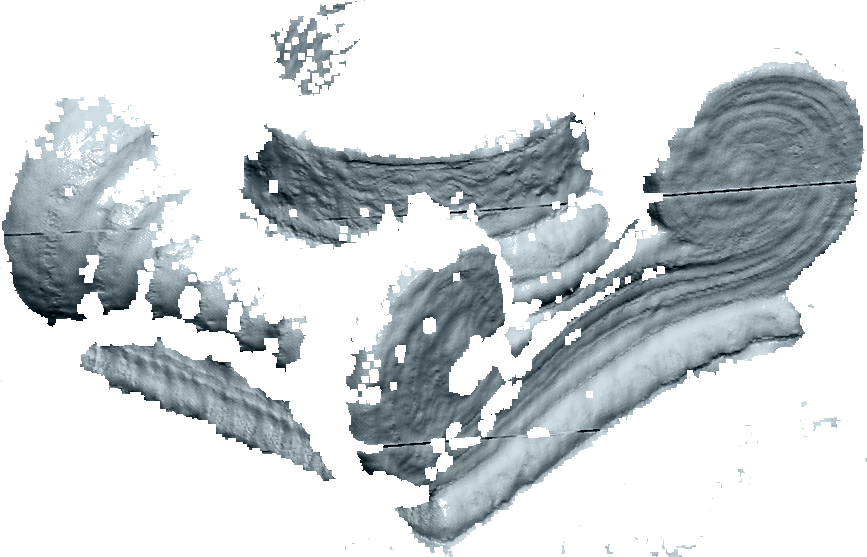} & 
   \includegraphics[width=0.495\linewidth]{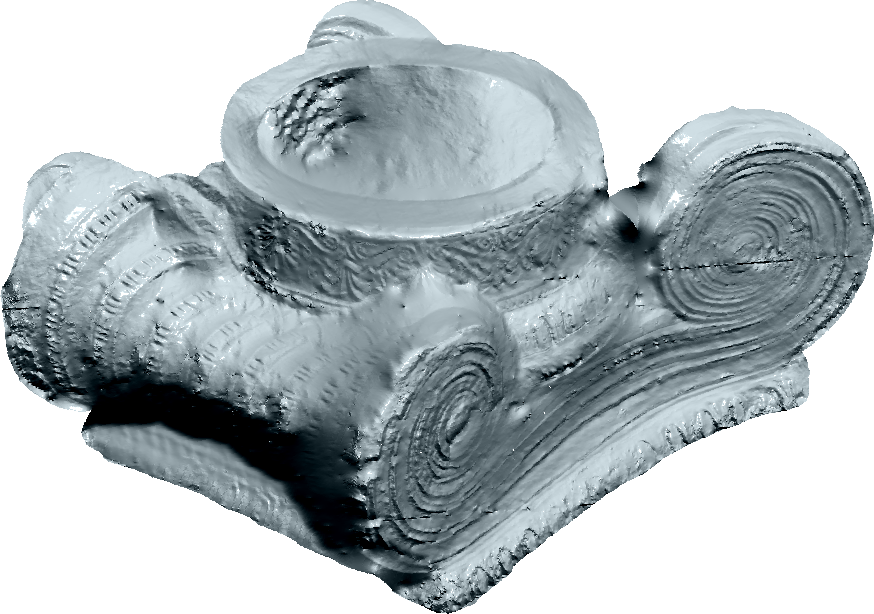} \\ 
 {\small Prior depth map $z_{2}^0$~\cite{Jancosek2011}} & {\small Refined depth map $z_2$}
  \end{tabular}
  \end{minipage}  
  \vskip-1em
\caption{Left: two (out of $30$) images $I_1$ and $I_2$ of the ``Figure'' object~\cite{Zollhoefer2015}. Middle: depth map $z_{2}^0$ obtained by the CMPMVS method~\cite{Jancosek2011} (before meshing). Right: refined and completed depth map $z_2$.}
\label{fig:figure}
\end{figure}

\section{Conclusion and Perspectives}
\label{sec:6}

We have introduced a generic variational framework for SFS under natural illumination, which can be applied in a broad range of scenarios. It relies on a tailored PDE-based SFS formulation which handles a variety of models for the camera and the lighting. To solve the resulting system of PDEs, we introduce an ADMM algorithm which separates the difficulty due to nonlinearity from that due to the dependency upon the gradient. Shape prior and nonlinear smoothing terms are easily included in this variational framework, allowing disambiguation of SFS as well as practical applications to depth map refinement and completion for RGB-D sensors or stereovision systems. 

As future work, we plan to investigate the convergence of the proposed ADMM algorithm for our non-convex problem, and to include reflectance and lighting estimation. With these extensions, we have good hope that the proposed variational framework may be useful in other computer vision applications, such as shading-aware dense multi-view stereo.

\bibliographystyle{splncs} 
\bibliography{biblio}

\end{document}